\documentclass{article}

    \PassOptionsToPackage{numbers, compress}{natbib}


    \usepackage[preprint]{neurips_2021}






\usepackage[utf8]{inputenc} 
\usepackage[T1]{fontenc}    
\usepackage{hyperref}
\hypersetup{
    colorlinks=true,
    linkcolor=black,
    citecolor=blue
}
\usepackage{times}
\usepackage{url}
\usepackage{amsfonts}       
\usepackage{nicefrac}       
\usepackage{microtype}      
\usepackage{amsthm}
\usepackage{amsmath,mathtools}
\usepackage{empheq}
\usepackage{amssymb}
\usepackage{macros}
\usepackage{booktabs}
\usepackage{multirow}
\usepackage{subcaption}
\usepackage{wrapfig}
\usepackage{chngcntr}
\usepackage{xspace}
\usepackage{bm}
\usepackage[ruled,vlined,linesnumbered]{algorithm2e}
\usepackage{nicefrac,xfrac}
\usepackage{multirow}
\usepackage[table,dvipsnames]{xcolor}
\usepackage{tcolorbox}
\usepackage{makecell}
\usepackage{algorithmic}
\usepackage{pifont}
\newcommand{\cmark}{{\color{PineGreen}\ding{51}}}%
\newcommand{\xmark}{{\color{BrickRed}\ding{55}}}%

\colorlet{mylinkcolor}{violet}
\colorlet{mycitecolor}{YellowOrange}
\colorlet{myurlcolor}{Aquamarine}


\newcommand{\vc}{\bm{c}}
\newcommand{\vx}{\bm{x}}
\newcommand{\vy}{\bm{y}}

\newcommand{\ve}{\bm{e}}
\newcommand{\vs}{\bm{s}}
\newcommand{\matA}{\bm{A}}
\newcommand{\matB}{\bm{B}}
\newcommand{\matH}{\bm{H}}
\newcommand{\matI}{\bm{I}}
\newcommand{\matP}{\bm{P}}

\newcommand{\matK}{\bm{K}}

\newcommand{\matM}{\bm{M}}
\newcommand{\matN}{\bm{N}}

\newcommand{\opA}{\mathcal{A}}

\newcommand{\obj}{F}
\newcommand{\numClients}{\ensuremath{M}}

\newcommand{\localStep}{\tau}
\newcommand{\data}{\ensuremath{\mathcal{D}}}
\newcommand{\clientDist}{\ensuremath{\mathcal{C}}}
\newcommand{\clientWeight}{w}
\newcommand{\activeClients}{\mathcal{S}}
\newcommand{\sgrad}{g}

\newcommand{\localChange}{\Delta}

\newcommand{\lr}{\eta}
\newcommand{\slr}{\alpha}
\newcommand{\ntg}{\bm{h}}

\newcommand{\lip}{L}

\newcommand{\ie}{\textit{i.e.,}\xspace}
\newcommand{\eg}{\textit{e.g.,}\xspace}

\newcommand{\fedavg}{\textsc{FedAvg}\xspace}

\newcommand{\fedopt}{\textsc{FedOpt}\xspace}

\newcommand{\adagrad}{\textsc{AdaGrad}\xspace}
\newcommand{\adam}{\textsc{Adam}\xspace}

\usepackage[colorinlistoftodos,prependcaption,disable]{todonotes}
\newcommand{\JW}[1]{\todo[color=yellow!25, inline]{ Jianyu: #1} \index{Jianyu: !#1}}
\newcommand{\zheng}[1]{\todo[color=blue!25, inline]{ Zheng: #1} \index{Zheng: !#1}}

\newcommand{\newadd}[1]{{\color{black}#1}}

\usepackage{enumitem}
\usepackage{cleveref}
\crefname{equation}{}{}
\Crefname{equation}{}{}
\crefname{thm}{theorem}{theorems}
\Crefname{thm}{Theorem}{Theorems}
\crefname{clm}{claim}{claims}
\Crefname{clm}{Claim}{Claims}
\Crefname{coro}{Corollary}{Corollaries}
\Crefname{lem}{Lemma}{Lemmas}
\Crefname{sec}{Section}{Sections}
\crefname{app}{appendix}{appendices}
\Crefname{app}{Appendix}{Appendices}
\Crefname{part}{Part}{Parts}
\crefname{prop}{proposition}{propositions}
\Crefname{prop}{Proposition}{Propositions}
\Crefname{propty}{Property}{Properties}
\crefname{figure}{fig.}{figures}
\Crefname{figure}{Figure}{Figures}
\crefname{defn}{definition}{definitions}
\Crefname{defn}{Definition}{Definitions}
\crefname{fact}{fact}{facts}
\Crefname{fact}{Fact}{Facts}
\crefname{appendix}{appendix}{appendices}
\Crefname{appendix}{Appendix}{Appendices}
\crefname{algo}{algorithm}{algorithms}
\Crefname{algo}{Algorithm}{Algorithms}
\crefname{algorithm}{algorithm}{algorithms}
\Crefname{algorithm}{Algorithm}{Algorithms}
\crefname{conj}{conjecture}{conjectures}
\Crefname{conj}{Conjecture}{Conjectures}
\crefname{obs}{observation}{observations}
\Crefname{obs}{Observation}{Observations}
\crefname{assump}{assumption}{assumptions}
\Crefname{assump}{Assumption}{Assumptions}
\crefname{rem}{remark}{remarks}
\Crefname{rem}{Remark}{Remarks}

\title{Local Adaptivity in Federated Learning: Convergence and Consistency}

%

\author{Jianyu Wang$^{\dagger}$\thanks{Work performed while doing an internship at Google Research. Emails: {jianyuw1@andrew.cmu.edu, \{xuzheng, zachgarrett, zachcharles, luyangliu\}@google.com, gaurij@andrew.cmu.edu}}, 
Zheng Xu$^\mathsection$, Zachary Garrett$^\mathsection$, Zachary Charles$^\mathsection$,
Luyang Liu$^\mathsection$, Gauri Joshi$^\dagger$ \\ \\
$^\dagger$Carnegie Mellon University, $^\mathsection$Google Research}

\begin{document}

\maketitle

\begin{abstract}
The federated learning (FL) framework trains a machine learning model using decentralized data stored at edge client devices by periodically aggregating locally trained models. Popular optimization algorithms of FL use vanilla (stochastic) gradient descent for both local updates at clients and global updates at the aggregating server. Recently, adaptive optimization methods such as AdaGrad have been studied for server updates. However, the effect of using adaptive optimization methods for local updates at clients is not yet understood. We show in both theory and practice that while local adaptive methods can accelerate convergence, they can cause a non-vanishing solution bias, where the final converged solution may be different from the stationary point of the global objective function. We propose correction techniques to overcome this inconsistency and complement the local adaptive methods for FL. Extensive experiments on realistic federated training tasks show that the proposed algorithms can achieve faster convergence and higher test accuracy than the baselines without local adaptivity.
\end{abstract}

\section{Introduction}
Federated learning (FL) is an emerging paradigm to perform distributed machine learning model training on decentralized edge clients (\eg mobile phones or IoT devices) under the orchestration of a central server, while keeping private training data on the client devices~\citep{kairouz2019advances}. In the cross-device FL setting, a global model is usually trained by a collaborative process of thousands or even millions of participating clients. In each training round of FL, the central server broadcasts the global model to a random subset of clients to perform local model training, and each participated client only uploads the model parameter changes back to the server. Then, the server aggregates local changes to update the global model and continues the next round. This FL training procedure was proposed by~\citet{mcmahan2016communication} as the \emph{federated averaging} (\fedavg) algorithm, and widely applied in diverse applications~\citep{hard2018federated,brisimi2018federated,xu2020federated}.

Recently, \citet{reddi2020adaptive} proposed a generalization of \fedavg referred to as \fedopt. \fedopt is a flexible algorithmic framework that allows the clients and the server to choose different optimization methods (which are referred to as \textsc{ClientOpt} and \textsc{ServerOpt}) more general than stochastic gradient descent (SGD) in \fedavg. The key idea is to treat the aggregated local changes from clients as a ``pseudo-gradient'' and use it as input to \textsc{ServerOpt} when updating the global model. A few previous works explored the choices of server optimizer (\textsc{ServerOpt}) in the \fedopt framework. For example, \citet{hsu2019measuring,Wang2020SlowMo} used SGD with momentum at the server while keeping client optimizer as SGD and observed significant empirical improvements.

Adaptive methods, such as \adagrad~\citep{mcmahan2010adaptive,duchi2011adaptive}, \adam~\citep{kingma2014adam}, have achieved superior empirical performance over SGD in centralized training of machine learning models for some applications. In particular, \adam and its variants~\citep{reddi2019convergence,zaheer2018adaptive,zhuang2020adabelief} are widely recognized as the preferred optimizers for language-related training tasks. Motivated by their superior performance, \citet{reddi2020adaptive} studied a specific class of \fedopt, where \textsc{ClientOpt} is still SGD but \textsc{ServerOpt} uses adaptive methods. It has been validated through extensive experiments in~\citep{reddi2020adaptive,tong2020effective} that the server-only adaptive methods can achieve faster convergence than vanilla \fedavg in many federated training tasks. 
\begin{figure}[t]
    \centering
    \begin{subfigure}{0.45\textwidth} 
    \includegraphics[width=\columnwidth]{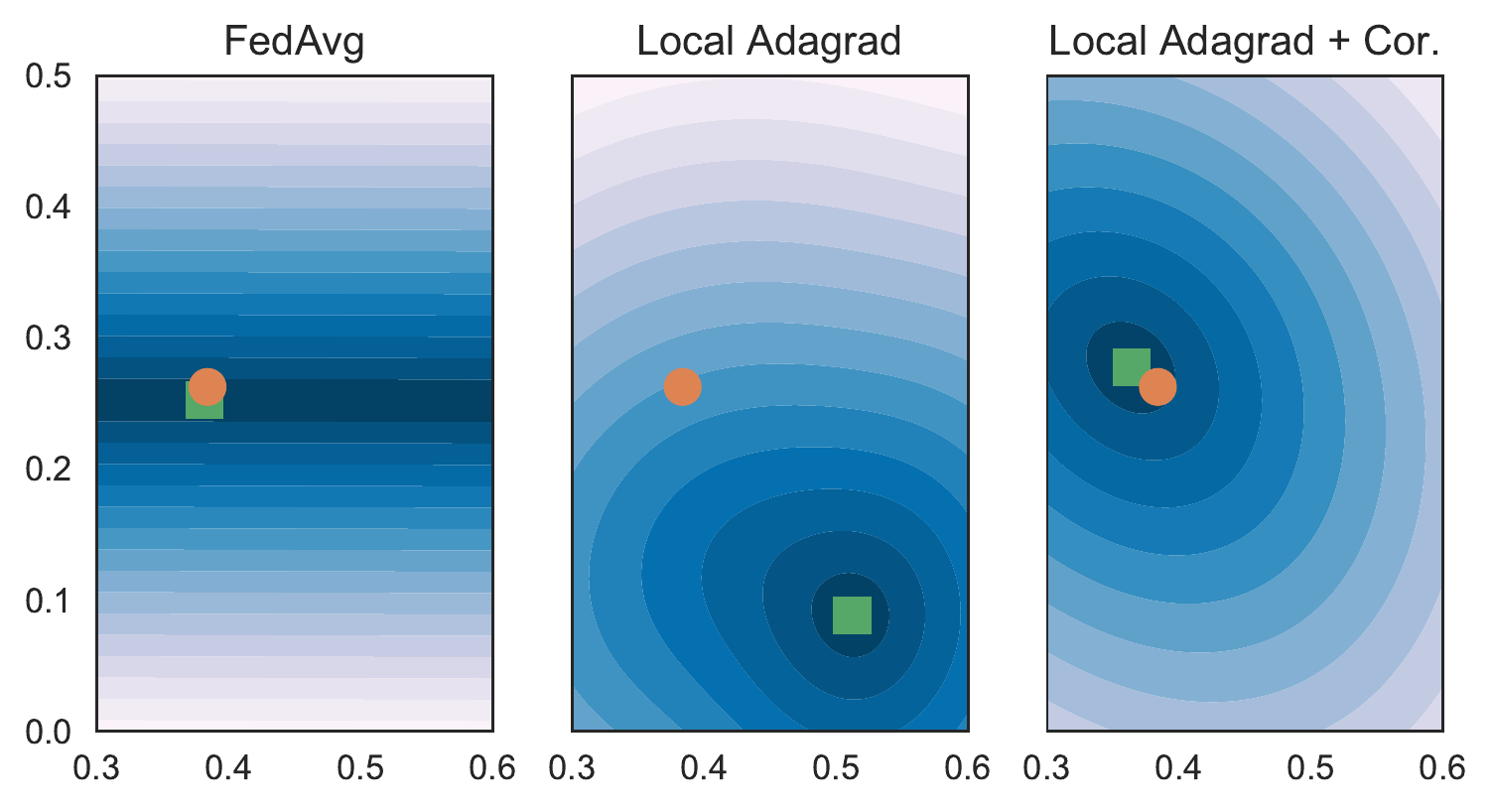}
    \caption{}
    \label{fig:loss_surface}
    \end{subfigure}%
    ~
    \begin{subfigure}{0.4\textwidth} 
    \includegraphics[width=\columnwidth]{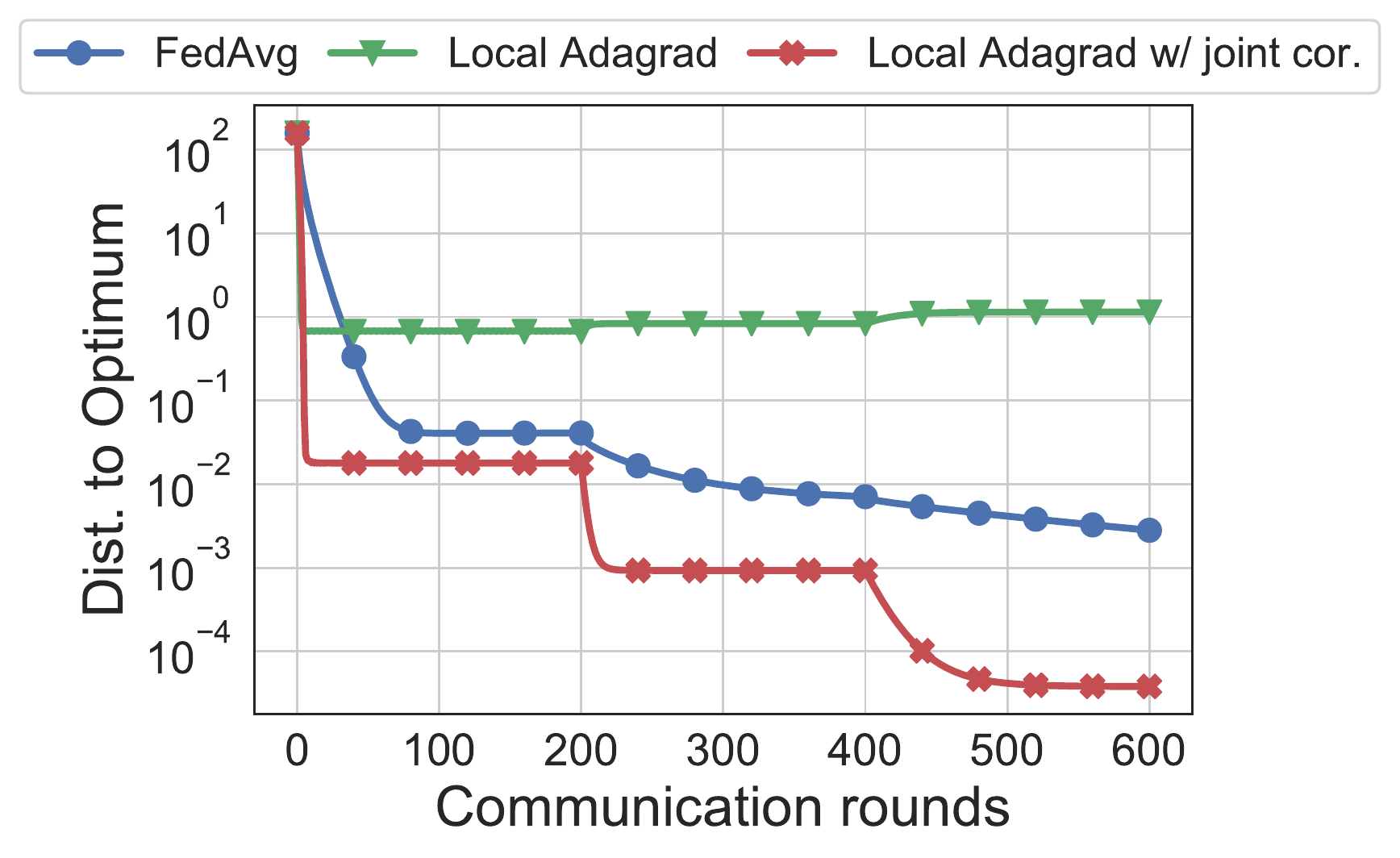}
    \caption{}
    \label{fig:qm_simulation}
    \end{subfigure}
    \caption{The effects of local adaptive methods on a synthetic 2D quadratic problem. Instantiations of \fedopt with local adaptivity can be viewed as finding the fixed points of some operator $\Exs[\mathcal{A}]$. For various methods, we plot (a) the norm $\|\vx-\Exs[\mathcal{A}(\vx)]\|$, where darker colors indicate closer to the fixed point 
    (Under local \adagrad, the optimization landscape is better conditioned, but the \textcolor{ForestGreen}{fixed-point} is inconsistent with the \textcolor{orange}{global minimum} of the original global objective.); (b) training curves where the client learning rate is decayed at rounds $200, 400$. We develop correction methods that can correct the inconsistency, while retaining the better conditioning.
    }
    \label{fig:illustraion}
    \vspace{-1em}
\end{figure}

Despite the success of using server-side adaptive methods, the power of adaptivity has not been fully utilized in the \fedopt framework, particularly on the client side. Instead of using adaptive methods only in \textsc{ServerOpt} for periodical updates per communication round, one can also choose to use adaptive methods in \textsc{ClientOpt} for every local iteration, or in both \textsc{ClientOpt} and \textsc{ServerOpt}. 
Client adaptive methods can accelerate the local convergence by better utilizing the geometric information of the local objectives. Unfortuantely, there are little to no literature exploring this promising direction. It remains an open problem whether the faster local convergence can be translated into faster global convergence and savings in communication rounds. 

In this paper, we explore the usage of local adaptive methods and provide affirmative answers to the question: \emph{Can federated learning effectively use adaptive optimization methods on local clients?}
Specifically, our main contributions are listed as follows.
    
    1. We identify that naively changing \textsc{ClientOpt} in the \fedopt framework from SGD to adaptive methods or other stateful optimizers can be problematic, as it does not define how to update client optimizer states (\ie pre-conditioners and momentum buffers) across rounds. We propose to overcome this discontinuity issue by restarting the update of client optimizer states at the beginning of each round. This simple method enables the usage of local adaptivity in \fedopt. We also show that, on many practical training tasks, using adaptive optimizers on clients  can achieve faster convergence and higher test accuracy than previous methods. \zheng{The previous sentence seems to be a bit vague and the logic on how it connects to the first sentence is not quite clear to me. What is the proposed method, and how did it overcome the issue described in the first sentence? Maybe just merge the two, drop the "identified" claim, and just directly describe we have a method to overcome the challenge.}\JW{Rephrase the complete part.}
    
    2. We further provide a theoretical analysis for strongly convex functions to understand the effects of various client optimizers. Our theorem suggests that performing \fedopt can be viewed as finding the fixed points of some operator $\Exs[\opA]$, the expression of which is determined by client optimizers. Local adaptivity can change $\Exs[\opA]$ in a way that yields faster convergence. However, this enhanced optimization may come with the cost of a \emph{non-vanishing} solution bias -- the point we converge to may be far away from the global minimizer of the original objective function, as illustrated in \Cref{fig:illustraion}.
    
    3. In order to overcome the side effects of using adaptive client optimizers, we propose (1) local correction technique that can help to mitigate the non-vanishing solution bias; (2) global correction technique that can help to perserve the fast convergence property of adaptive methods. Using one or both of them on the top of adaptive client optimizers can achieve the highest test accuracy on all considered federated training tasks.

\textbf{Related Works.}
In terms of using adaptive methods on clients, \citet{xie2019local,karimireddy2020mime} proposed to use the same optimizer states (pre-conditioners and momentum buffers) on all clients, which is similar to server adaptive methods in \cite{reddi2020adaptive} and does not exploit local adaptivity. To the best of our knowledge, this paper is the first to let clients separately update their optimizer states and study the effects of adaptive client optimizers. Besides, there are few recent literature~\cite{wang2020tackling,charles2021convergence,pathak2020fedsplit,malinovskiy2020local} also study the phenomenon that \fedavg-style algorithms can converge to a mismatched solution from the optimal one. We refer readers to \Cref{sec:inconsistency} for detailed comparisons with these works. At last, within the \fedopt framework, there are many more building blocks that can accelerate the convergence or improve the practical performance, for instance, using control variates~\cite{karimireddy2019scaffold,karimireddy2020mime,liang2019variance}, regularizing the local objective functions \cite{li2018federated,li2019feddane,zhang2020fedpd}, sampling important clients more frequently \cite{cho2020client,chen2020optimal,ribero2020communication}. In this paper, we focus on an important but rarely studied topic -- local adaptivity, which is orthogonal and complementary to the above previous works.
\zheng{optional: I am not sure if claiming complementary is good enough or we want to be stronger on the difference, e.g., we care more of a cross-device settings with stateless clients. A reviewer may still ask us to compare with one of these methods. If we make the claim of the first sentence ``There are extensive literature on accelerating the convergence of federated learning algorithms beyond the \fedopt framework'' (which is arguable and assumes strong context of readers to know very nuanced parts of what belongs to \fedopt), we should have a strong support for why we limit ourselves to \fedopt.}\JW{I rewrote the above sentences. PTAL. Start with the local adaptivity. Add one more sentence.}

\section{Preliminaries}\label{sec:related_works}

\textbf{The Federated Learning Setup.}
In federated learning, the goal is to minimize the objective
\begin{align}
    \obj(\vx) = \Exs_{i\sim \clientDist}[ \obj_i(\vx) ]\label{eqn:global_obj_erm}
\end{align}
where $\obj_i(\vx)=\frac{1}{|\data_i|}\sum_{\xi \in \data_i}f_i(\vx,\xi)$ denotes the local objective function at client $i$, and $f_i(\vx,\xi)$ is the loss function where $\xi$ represents one data sample from the local dataset. In contrast to classic distributed learning, federated learning does not allow clients to share their local data with others or with the central server. As a consequence, the local data distributions $\data_i$ may differ across clients. The term $\clientDist$ denotes the distribution over \newadd{the collection of} all clients. The probability of selecting client $i$ is 
$\clientWeight_i=\mathbb{P}_{\clientDist}(i)$, which is proportional to the dataset size of client $i$. Thus, for a finite set of $M$ clients, the objective function becomes $\obj(\vx) = \sum_{i=1}^{M} w_i \obj_i(\vx)$. 


\textbf{Operator View of Client Optimizers.} In this paper, we are going to use operators to represent the general optimizers on clients. In particular, for each client $i$, we have the following definition.
\begin{defn}[Client Optimizer Operator] \label{defn:opA}
We define an operator $\opA_i(\vx;k,\Xi_i,\vs_i): \mathbb{R}^d \rightarrow \mathbb{R}^d$, which outputs the updated model after performing $k$ steps of a given client optimizer on the local objective $F_i$, from initial model $\vx$, with initial optimizer states $\vs_i$ and random sources $\Xi_i$ (\eg noise in stochastic gradients).
\end{defn}
The operator $\opA_i$ defines how the local models and the client optimizer states (\ie pre-conditioners and momentum buffers) are updated. When the parameters $k, \vs_i, \Xi_i$ are given, the output of $\opA_i$ will only depend on the starting point $\vx$. The specific form of $\opA_i$ changes with the choice of optimizers and its analytical expression can be complicated. For example, when the client optimizer is pre-conditioned SGD (a generalized version of \adagrad) \citep{martens2020new}, we have
\begin{align}\label{eqn:adagrad_iter}
\textstyle
    \opA_i(\vx;k,\Xi_i,\vs_i) = \vx - \lr_i\sum_{s=0}^{k-1}\bm{P}_i^{(s)} \sgrad_i(\vx^{(s)};\xi^{(s)})
\end{align}
where $\lr_i$ is the client learning rate, $\vx^{(s)}$ is the local model after $s$ local iterations, $\sgrad(\vx^{(s)};\xi^{(s)})$ denotes the stochastic gradient evaluated on a random mini-batch $\xi^{(s)}$, and $\bm{P}_i^{(s)}$ is referred to as the \emph{pre-conditioner}. In vanilla SGD, the pre-conditioner is just the identity matrix; In \adagrad, the pre-conditioner is updated as follows:
\begin{align}\label{eqn:adagrad_precond}
    \matP_i^{(s)} = ((\matP_i^{(s-1)})^{-2} + \diag\{g(\vx^{(k)};\xi^{(k)})g(\vx^{(k)};\xi^{(k)})\tp\})^{-\frac{1}{2}}
\end{align}
where $\matP_i^{(-1)}=\vs_i$ is the initial optimizer states. In this example, the random sources $\Xi_i=\{\xi^{(s)}\}$ represents all randomness in the stochastic gradients.

\textbf{Operator View of \fedopt.}
Based on \Cref{defn:opA}, at each round $t$ of \fedopt, the server broadcasts the global model $\vx^{(t)}$ as an initial point to clients. Then, client $i$ uses $\opA_i$ to obtain the locally updated model as follows:
\begin{align}\label{eqn:local_update}
    \vx_i^{(t,\tau_i)} = \opA_i(\vx^{(t)};\tau_i,\Xi_i^{(t)},\vs_i^{(t)}).
\end{align}
After that, the server aggregates local models changes $\vx^{(t)} - \vx_i^{(t,\tau_i)}$ and uses \textsc{ServerOpt} to update the global model to $\vx^{(t+1)}$. Since the server optimizer does not influence the properties of $\opA_i$ and the main purpose of this paper is to understand the effects of client optimizers, we assume \textsc{ServerOpt} is gradient descent (GD) unless stated otherwise, which means that each client participates every round and there is no server pre-conditioner. Our results can be naturally extended to other server optimizers and sampled clients without changing the main insights on client optimizers. Then, the GD-based global update rule of \fedopt can be written as
\begin{align}\label{eqn:global_update}
\textstyle
    \vx^{(t+1)} = \vx^{(t)} - \slr \sum_{i=1}^M \clientWeight_i [\vx^{(t)} - \opA_i(\vx^{(t)};\localStep_i,\Xi_i^{(t)},\vs_i^{(t)})]
\end{align}
where $\slr$ denotes the server learning rate for \textsc{ServerOpt}.
\zheng{Maybe worth clarifying the difference compared to the original \fedopt? Did we generalize,  or specify  client optimizer?}
\JW{This is just preliminary. So it is \fedopt itself without any generalizations.} \zheng{I am not sure if $\Xi_i^{(t)},\vs_i^{(t)}$ are part of \fedopt, it is more of "our interpretation" of \fedopt.}\JW{At least, $\Xi_i$ should be in the \fedopt algorithm, as it corresponds to the noise in stochastic gradients.}

\section{How to Apply Adaptive Methods on Clients}\label{sec:simple_restart}
In this section, we describe the challenges of using local adaptivity in \fedopt and propose a simple, yet effective method to enable the usage of local adaptivity in \fedopt without incurring additional communication costs.

\textbf{Challenge in \fedopt: Discontinuity of Client Optimizer States.}
According to the update rules \Cref{eqn:local_update,eqn:global_update} of \fedopt, we observe that \fedopt can not be directly applied to adaptive client optimizers, as it lacks the articulation of handling the initial optimizer states $\vs_i$ across rounds and clients. 
To be specific, in \fedopt, the client updates are isolated not only from each other but also from different rounds due to the local iterations. While \fedopt uses \textsc{ServerOpt} to bridge the gap between local iterates in different rounds (\ie it specifies how to obtain $\vx^{(t+1)}$ from $\{\vx_i^{(t,\localStep_i)}\}$), it does not define how to update the initial optimizer states $\vs_i$ across rounds. We refer to this problem as \emph{discontinuity of client optimizer states}.

A natural but impractical idea to tackle the discontinuity problem is to let each client inherit its own client optimizer states from previous rounds. However, these previous states can be stale and inaccurate because they are evaluated at point $\vx_i^{(t-1, \localStep_i)}$ and do not take account of the possibly dramatic changes from $\vx_i^{(t-1, \localStep_i)}$ to the current iterate $\vx^{(t)}$. When the server only selects a random subset of clients at each round, the staleness will be further exacerbated, as one client may be disconnected from the server for multiple rounds. Besides, this solution is impossible in many applications of FL (\eg the cross-device setting) that requires clients to be stateless and not to maintain any persistent states across rounds, due to privacy and system constraints~\citep{kairouz2019advances}.
Furthermore, one can also choose to synchronize the local optimizer states at the end of each round and use the synchronized states as the initial value for next round. For example, \citet{yu2019linear} has used a similar strategy to synchronize momentum buffer when the clients use momentum SGD. However, in order to aggregate and broadcast the additional optimizer states, this strategy would incur doubled or tripled communication costs compared to \fedopt with vanilla SGD client optimizer. 

\textbf{Our Proposal: Restarting Client Optimizer States.}
To work with the above challenges, we propose a simple yet effective approach. Instead of applying potentially complicated mechanisms to synchronize optimizer states across clients, we restart the updates of client optimizer states at the beginning of each round. That is, resetting the pre-conditioners to a constant and resetting the momentum buffers to zero. This strategy does not require any information from previous rounds, so it is compatible with application settings that require stateless clients. Also, it does not incur addition communication costs, as only the local model changes are aggregated.


To verify the advantage of this simple strategy, in \Cref{fig:qm_simulation}, we provide simulation results on a toy problem of quadratic functions with deterministic gradients. The result shows the restarting strategy on local \adagrad (\ie the green curve) can achieve significantly faster convergence speed, comparing to vanilla \fedavg (\ie the blue curve) when the tolerance to the global optimal is relatively large. The fast convergence is also observed on practical federated training tasks (see \Cref{fig:adagrad_variants} and more discussions in \Cref{sec:exps}). 


\section{Effects of Client Optimizers on Convergence and Consistency}\label{sec:inconsistency}
Another important observation from the toy example in \Cref{fig:qm_simulation} is that local \adagrad converges to a point which is quite far away from the global optimum. Even if we decay the client learning rate, this inconsistency problem still cannot be mitigated or becomes even worse. In this section, we will theoretically analyze the effect of client optimizers on the convergence and provide insights into the trade-off between convergence speed and the consistency of the solution. 

\textbf{\fedopt Performs Fixed Point Iteration.}
When using the restarting strategy proposed in \Cref{sec:simple_restart}, the argument $\vs_i$ representing the initial optimizer state in the operator $\opA_i$ can be omitted since it is set to its default initial value and is the same on all clients. We further define an operator $\opA(\vx) = \sum_{i=1}^M \clientWeight_i \opA_i(\vx;\localStep_i,\Xi_i)$. Accordingly, the global update rule (see Eqn. \Cref{eqn:global_update}) of \fedopt is equivalent to a stochastic fixed point iteration, shown as follows:
\begin{align}\label{eqn:mann_iter}
    \vx^{(t+1)}
    = (1-\slr)\vx^{(t)} +  \slr\Exs[\opA(\vx^{(t)})] +\slr (\opA(\vx^{(t)}) - \Exs[\opA(\vx^{(t)})])
\end{align}
where $\slr$ denotes the server learning rate and the expectation is taken over all random sources in local stochastic gradients at the current round. The deterministic version of \Cref{eqn:mann_iter} is known as Mann's iteration \cite{ryu2016primer} and will converge to the fixed points of $\Exs[\opA]$ under certain conditions. Replacing server GD with another server optimizer can accelerate the convergence but it will not influence the properties and the fixed points of the operator $\Exs[\opA]$. 

We will analyze the convergence of \fedopt (update rule of which is \Cref{eqn:mann_iter}) in the setting where each local objective function $\obj_i(\vx)$ is $L_i$-Lipschitz smooth and $\mu_i$-strongly convex. Besides, we make the following assumptions on the properties of each client optimizer $\opA_i$ and show how the global convergence is influenced by these properties. If the client optimizer is SGD, then the contractive local operator and bounded cumulative variance properties described below follow directly from the strong convexity and smoothness of the objective function, suggesting the following assumptions are reasonable and mild. 

\begin{assump}[Contractive Local Operator]\label{assump:contractive_op}
The local expected operator $\Exs[\opA_i]$ is contractive, satisfying that $\vecnorm{\Exs[\opA_i(\vx;k,\Xi)] - \Exs[\opA_i(\vy;k,\Xi)]}^2 \leq h_i(k) \cdot \vecnorm{\vx - \vy}^2$ for any $\vx, \vy \in \mathbb{R}^d$, where $0 < h_i(k) < 1$ is a decreasing function of the number of local steps $k$.
\end{assump}
\begin{assump}[Bounded Cumulative Variance]\label{assump:bnd_var}
We assume the local stochastic gradient $g_i$ is an unbiased estimator of $\nabla F_i$ and has bounded variance: $\Exs\vecnorm{g_i(\vx) - \nabla F_i(\vx)}^2\leq \sigma^2$. Similarly, the local operator $\opA_i$ has bounded cumulative variance: $\Exs\vecnorm{\opA_i(\vx;k,\Xi) - \Exs[\opA_i(\vx;k,\Xi)]}^2 \leq q_i(k) \cdot \sigma^2$, where $q_i(k) > 0$ is a non-decreasing function of the number of local steps $k$.
\end{assump}
The function $h_i(k)$ in \Cref{assump:contractive_op} measures the local training progress of \textsc{ClientOpt}. In intuition, when \textsc{ClientOpt} has faster convergence rate or takes more local steps, the output of $\Exs[\opA_i]$ will be closer to the minimum $\vx_i^*$, and hence $h_i$ will become smaller. Besides, the function $q_i(k)$ in \Cref{assump:bnd_var} quantifies how the noise in stochastic gradients are accumulated through $k$ local updates. The analytical forms of $h_i, q_i$ depend on the choice of \textsc{ClientOpt}. For example, when the client optimizer is SGD with fixed client learning rate, we formally prove in \Cref{sec:justify_assump} that $h_i(k)=(1-\lr_i\mu_i)^{2k}$ and $q_i(k)=\lr_i^2 k$. For adaptive optimizers, such as \adagrad and \adam, while it is difficult to obtain the analytical form of $h_i$, we empirically validate \Cref{assump:contractive_op} in \Cref{sec:empirical_assump} and found they can yield smaller $h_i$ values than vanilla SGD.

\begin{thm}[Convergence of \fedopt and Minimizer Inconsistency]\label{thm:conv_fedopt}
Under \Cref{assump:contractive_op,assump:bnd_var}, the operator $\Exs[\opA]$ is contractive and has a unique fixed point $\widetilde{\vx}$. After total $T$ communication rounds, the global iterates of \fedopt converge to $\widetilde{\vx}$ at the following rate for some positive constant $c$:
\begin{align}\label{eqn:thm1}
    \Exs\|\vx^{(T)} - \widetilde{\vx}\|^2 
    \leq c \cdot \parenth{\brackets{\sum_{i=1}^M \clientWeight_i h_i}^T\frac{\|\vx^{(0)} - \widetilde{\vx}\|^2}{1-\sum_{i=1}^M \clientWeight_i h_i} + \frac{1}{T}\frac{\sigma^2\sum_{i=1}^\numClients \clientWeight_i^2  q_i}{(1-\sum_{i=1}^M \clientWeight_i h_i)^2}}.
\end{align}
In the presence of data heterogeneity, the fixed point $\widetilde{\vx}$ of $\Exs[\opA]$ is not necessarily the same as the optimum $\vx^*$ of the global objective $\obj$. We refer to this problem as minimizer inconsistency.
\end{thm}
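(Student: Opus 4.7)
The plan is to combine a Banach-style contraction argument with a bias-variance decomposition of the stochastic Mann iteration in \Cref{eqn:mann_iter}, and then exhibit the inconsistency via a structural argument backed by a concrete heterogeneous example.

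\textbf{Step 1 (contractivity and existence of $\widetilde{\vx}$).} Because $\opA(\vx) = \sum_{i=1}^M \clientWeight_i \opA_i(\vx;\localStep_i,\Xi_i)$ is a convex combination across clients, linearity of expectation gives $\Exs[\opA(\vx)] - \Exs[\opA(\vy)] = \sum_i \clientWeight_i (\Exs[\opA_i(\vx)] - \Exs[\opA_i(\vy)])$. Applying Jensen's inequality to the convex function $\|\cdot\|^2$ and then \Cref{assump:contractive_op} client-by-client yields $\|\Exs[\opA(\vx)] - \Exs[\opA(\vy)]\|^2 \leq H \|\vx - \vy\|^2$ with $H := \sum_i \clientWeight_i h_i < 1$. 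Thus $\Exs[\opA]$ is a strict contraction on $\R^{\modelSize}$, and Banach's fixed point theorem provides the unique $\widetilde{\vx}$ satisfying $\Exs[\opA(\widetilde{\vx})] = \widetilde{\vx}$.

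\textbf{Step 2 (one-step recursion).} Subtracting $\widetilde{\vx}$ from both sides of \Cref{eqn:mann_iter} and using $\Exs[\opA(\widetilde{\vx})] = \widetilde{\vx}$ gives
\begin{equation*}
\vx^{(t+1)} - \widetilde{\vx} = (1-\slr)(\vx^{(t)} - \widetilde{\vx}) + \slr\bigl(\Exs[\opA(\vx^{(t)})] - \Exs[\opA(\widetilde{\vx})]\bigr) + \slr\bigl(\opA(\vx^{(t)}) - \Exs[\opA(\vx^{(t)})]\bigr).
\end{equation*}
The final term has conditional mean zero given $\vx^{(t)}$, so cross-terms vanish upon taking conditional expectation of the squared norm. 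For the deterministic convex combination of $(\vx^{(t)} - \widetilde{\vx})$ and $(\Exs[\opA(\vx^{(t)})] - \Exs[\opA(\widetilde{\vx})])$, convexity of $\|\cdot\|^2$ plus the contraction from Step 1 produces the factor $(1 - \slr(1-H))$. For the noise term, independence of $\Xi_i$ across clients together with \Cref{assump:bnd_var} yields $\Exs\|\opA(\vx^{(t)}) - \Exs[\opA(\vx^{(t)})]\|^2 \leq \sigma^2 \sum_i \clientWeight_i^2 q_i$.

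\textbf{Step 3 (unrolling and tuning).} The one-step inequality $\Exs\|\vx^{(t+1)} - \widetilde{\vx}\|^2 \leq (1 - \slr(1-H))\,\Exs\|\vx^{(t)} - \widetilde{\vx}\|^2 + \slr^2 \sigma^2 \sum_i \clientWeight_i^2 q_i$ unrolls into a geometric bias term plus a noise floor of order $\slr \sigma^2 \sum_i \clientWeight_i^2 q_i / (1-H)$. Balancing the two by a suitable choice of $\slr$ (or equivalently by geometrically-weighted iterate averaging) produces the exponential-in-$T$ bias and $1/T$ variance contributions of the displayed bound, with the positive constant $c$ absorbing the stepsize-balancing factors and the two factors of $1/(1-H)$ arising from the geometric series sum.

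\textbf{Step 4 (minimizer inconsistency).} The fixed-point condition $\Exs[\opA(\widetilde{\vx})] = \widetilde{\vx}$ is structurally different from the stationarity condition $\nabla \obj(\widetilde{\vx}) = 0$. For SGD \textsc{ClientOpt}, expanding \Cref{eqn:adagrad_iter} with identity pre-conditioners at $\vx = \widetilde{\vx}$ shows the fixed-point equation becomes $\sum_i \clientWeight_i \sum_{s=0}^{\localStep_i-1} \Exs[\nabla \obj_i(\vx_i^{(s)})] = 0$, where $\vx_i^{(s)}$ is the $s$-th local iterate initialized at $\widetilde{\vx}$. Since each $\vx_i^{(s)}$ drifts toward the heterogeneous local minimizer $\vx_i^\star$, this is a weighted sum of gradients evaluated at client-specific drifted points, which generically differs from $\sum_i \clientWeight_i \nabla \obj_i(\widetilde{\vx}) = \nabla \obj(\widetilde{\vx})$. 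The 2D heterogeneous quadratic in \Cref{fig:illustraion} provides an explicit witness where $\widetilde{\vx}$ and $\vx^\star$ admit closed forms and are provably distinct whenever the local optima are misaligned and $\localStep > 1$.

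\textbf{Main obstacle.} The principal technical hurdle is Step 3: simultaneously obtaining a geometric bias decay and a $1/T$ variance decay with matching prefactors of $1/(1-H)$ and $1/(1-H)^2$ requires a careful stepsize schedule or a weighted averaging scheme rather than the naive unrolling, which only yields a constant noise floor. The inconsistency in Step 4 is not an artifact of the proof but an intrinsic consequence of the mismatch between $\Exs[\opA]$ and the identity-minus-gradient map; it cannot be closed by a mean-field argument and genuinely needs a constructive counterexample.
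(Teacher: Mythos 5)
Your proposal follows essentially the same route as the paper: Jensen's inequality plus \Cref{assump:contractive_op} gives contractivity of $\Exs[\opA]$ with constant $\sum_i \clientWeight_i h_i$ and Banach's theorem gives $\widetilde{\vx}$; the one-step recursion with contraction factor $(1-\slr(1-\sum_i \clientWeight_i h_i))$ and variance term $\slr^2\sigma^2\sum_i \clientWeight_i^2 q_i$ is identical; and the inconsistency claim is likewise established by deferral to the heterogeneous quadratic counterexample rather than inside this proof. The stepsize-balancing step you correctly flag as the main obstacle is exactly what the paper resolves by invoking the tuned-stepsize lemma of \citet{stich2019unified}, which converts the linear recursion directly into the geometric-plus-$1/T$ bound with the stated $1/(1-\sum_i \clientWeight_i h_i)$ and $1/(1-\sum_i \clientWeight_i h_i)^2$ prefactors.
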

\textbf{Better Client Optimizers Can Accelerate the Global Convergence.}
\Cref{thm:conv_fedopt} provides a nice connection between local and global convergence. Specifically, if clients use better client optimizers that have smaller contraction constant $h_i$, then the worst-case global convergence to the fixed point $\widetilde{\vx}$ can be improved, as the error bound \Cref{eqn:thm1} monotonically increases with $h_i$. Therefore, a natural strategy to speedup the global convergence is to separately minimize $h_i$ on each client while keeping $q_i$ unchanged. One can achieve this by either tuning a proper learning rate for each client, or using adaptive client optimizers which could reduce the effective local condition number. Intuitively, using better client optimizers makes the operator $\Exs[\opA]$ to be more contractive, and hence, it becomes easier to find its fixed points (as corroborated in \Cref{fig:loss_surface}, the loss surface is better conditioned when using local \adagrad). This acceleration effects is complementary to that of the server optimizer, which changes the form of the fixed point iteration without affecting $\Exs[\opA]$. Therefore, one can use both better client and server optimizers to get the best performance, as we will later illustrate in \Cref{sec:exps}.

\textbf{The Minimizer Inconsistency Problem.}
\Cref{thm:conv_fedopt} also shows that the speedup effects of client optimizers may come with a price: the fixed point $\widetilde{\vx}$ can be different from the optimal minimizer $\vx^*$. When the objective function is quadratic, \textsc{ClientOpt} is GD and all clients have the same learning rates $\lr$ and number of local steps $\tau$, it has been shown in \cite{charles2021convergence,pathak2020fedsplit} that $\| \widetilde{\vx} -\vx^* \| =\mathcal{O}(\lr(\tau-1))$ such that one can balance the trade-off between convergence speed and the minimizer inconsistency by gradually decaying the client learning rate. However, through a simple counterexample below, we will show that \emph{using adaptive optimizers or different hyper-parameters on clients leads to an additional gap between $\widetilde{\vx}$ and $\vx^*$, which does not vanish to zero with the learning rate}. This non-vanishing bias explains our empirical observations in \Cref{fig:qm_simulation} and calls for new techniques to overcome it.

\textbf{Quadratic Example for \Cref{thm:conv_fedopt}.} 
In this toy example, we assume the loss function for client $i$ is given by $\obj_i(\vx) = \frac{1}{2}\vx\tp\matH_i\vx - \ve_i\tp\vx+\vc_i$, where $\matH_i$ is symmetric and positive definite and $\ve_i, \vc_i$ are some vectors. Accordingly, the minimum of $F_i$ is simply $\vx^*_i=\matH_i^{-1}\ve_i$ and the minimum of the global objective is $\vx^*=(\sum_{i=1}^M \clientWeight_i\matH_i)^{-1}\sum_{i=1}^M \clientWeight_i \matH_i\vx_i^*$. 
\begin{thm}\label{thm:qm}
For the quadratic problem, if the client optimizer is preconditioned GD with a fixed pre-conditioner $\matP_i$, then $\widetilde{\vx} = (\sum_{i=1}^M \clientWeight_i[\matI-\matK_i])^{-1}\sum_{i=1}^M \clientWeight_i[(\matI - \matK_i)\vx_i^*]\neq \vx^*$, where $\matK_i=(\matI-\lr_i\matP_i\matH_i)^{\localStep_i}$. If we let $\lr_i = \gamma_i \lr$ and $\lr$ approach to zero, then it follows that
\begin{align}
\textstyle
    \lim_{\lr \rightarrow 0 }\widetilde{\vx} 
    = (\sum_{i=1}^M \clientWeight_i\gamma_i\localStep_i\matP_i\matH_i)^{-1}(\sum_{i=1}^M \clientWeight_i\gamma_i\localStep_i\matP_i\matH_i\vx_i^*) \neq \vx^*.
\end{align}
\end{thm}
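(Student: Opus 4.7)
The plan is: (i) derive a closed-form expression for the affine operator $\opA_i$ in the quadratic case, (ii) solve the linear fixed-point equation $\widetilde{\vx}=\opA(\widetilde{\vx})$ to get the formula for $\widetilde{\vx}$, and (iii) perform a first-order Taylor expansion in $\lr$ to extract the small-learning-rate limit.

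First I would observe that for the quadratic $\obj_i$, $\nabla \obj_i(\vx)=\matH_i(\vx-\vx_i^*)$, so one preconditioned gradient step satisfies $\vx^{(s+1)}-\vx_i^* = (\matI-\lr_i \matP_i \matH_i)(\vx^{(s)}-\vx_i^*)$. Unrolling $\localStep_i$ steps yields
$$\opA_i(\vx) = \vx_i^* + \matK_i(\vx-\vx_i^*) = \matK_i\vx + (\matI-\matK_i)\vx_i^*.$$
Since preconditioned GD here is deterministic, $\opA = \sum_i \clientWeight_i \opA_i$ equals $\Exs[\opA]$ and is affine with linear part $\sum_i \clientWeight_i \matK_i$ and offset $\sum_i \clientWeight_i(\matI-\matK_i)\vx_i^*$. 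Imposing the fixed-point condition $\widetilde{\vx}=\opA(\widetilde{\vx})$ and collecting terms gives the linear system
$$\Big[\sum_{i=1}^M \clientWeight_i(\matI-\matK_i)\Big]\widetilde{\vx} = \sum_{i=1}^M \clientWeight_i(\matI-\matK_i)\vx_i^*,$$
and inverting the left-hand matrix recovers the first formula in the theorem. Invertibility needs a short justification: for small $\lr$, each $\matI-\matK_i$ is close to $\lr_i\localStep_i \matP_i \matH_i$, which is positive definite ($\matP_i,\matH_i\succ 0$), and the convex combination inherits this.

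For the limit, substitute $\lr_i = \gamma_i \lr$ and expand $\matK_i = (\matI - \lr\gamma_i\matP_i\matH_i)^{\localStep_i}$ as $\matI - \matK_i = \lr\,\localStep_i\gamma_i \matP_i\matH_i + O(\lr^2)$. Plugging into both sides, the factor $\lr$ appears to leading order in both the inverted matrix and the right-hand side and cancels when taking $\lr\to 0$, yielding
$$\lim_{\lr\to 0}\widetilde{\vx} = \Big[\sum_{i=1}^M \clientWeight_i \gamma_i \localStep_i \matP_i \matH_i\Big]^{-1}\sum_{i=1}^M \clientWeight_i \gamma_i \localStep_i \matP_i \matH_i \vx_i^*.$$
Comparing with $\vx^* = (\sum_i \clientWeight_i \matH_i)^{-1}\sum_i \clientWeight_i \matH_i \vx_i^*$, one sees that client-specific factors $\matP_i$, $\gamma_i$, $\localStep_i$ reweight the contributions of each $\vx_i^*$, so generically the two weighted averages of $\vx_i^*$ disagree whenever the $\vx_i^*$ are not collinear with the image of $\vx^*$; a concrete instance with, e.g., $M=2$ and different $\matP_i$ suffices as a counterexample.

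The main obstacle I anticipate is not the algebraic manipulation, which is routine once the fixed-point characterization is in hand, but carefully handling the $\lr\to 0$ limit: one must ensure that the $O(\lr^2)$ Taylor remainders in both the matrix being inverted and the right-hand vector are dominated by the $O(\lr)$ leading terms uniformly, so that the $\lr$ factors can be legitimately canceled between the inverse and the right-hand side. This can be done by factoring $\lr$ out of $\sum_i \clientWeight_i(\matI-\matK_i) = \lr\,[\matM + \lr\matN(\lr)]$ with $\matM=\sum_i\clientWeight_i\localStep_i\gamma_i\matP_i\matH_i\succ 0$ and $\matN(\lr)$ bounded, and then applying the Neumann series $(\matM+\lr\matN)^{-1} = \matM^{-1} + O(\lr)$.
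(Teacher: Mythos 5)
Your proposal is correct and follows essentially the same route as the paper's proof: unroll preconditioned GD to get the affine form $\opA_i(\vx)=\matK_i\vx+(\matI-\matK_i)\vx_i^*$, solve the resulting linear fixed-point equation for $\widetilde{\vx}$, and Taylor-expand $\matI-\matK_i=\lr\gamma_i\localStep_i\matP_i\matH_i+O(\lr^2)$ to take the limit (your Neumann-series remark makes the cancellation of $\lr$ somewhat more rigorous than the paper, which simply substitutes the first-order approximation). The only loose spot is the claim that a convex combination of the non-symmetric products $\matP_i\matH_i$ is positive definite --- each is merely similar to an SPD matrix, so invertibility of the aggregated matrix is really an implicit hypothesis --- but the paper glosses over the same point, obtaining invertibility of $\sum_i\clientWeight_i(\matI-\matK_i)$ only from the assumed contractivity of $\sum_i\clientWeight_i\matK_i$.
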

\Cref{thm:qm} shows that as long as one of the factors (\ie client learning rate, the number of local steps, the pre-conditioners) is different across clients, then there would appear a non-vanishing gap between $\widetilde{\vx}$ and $\vx^*$. In order to get an intuition behind this phenomenon, we can check the exact expression of $\vx^* - \opA(\vx^*)$. If $\widetilde{\vx}=\vx^*$, then $\vx^* - \opA(\vx^*)$ should always be zero. However, note that
\begin{align}\label{eqn:qm_x-Ax}
\textstyle{
    \vx^* - \opA(\vx^*)
    = \sum_{i=1}^M w_i[\matI - (\matI - \lr_i\matP_i \matH_i)^{\tau_i}]\matH_i^{-1}\nabla \obj_i(\vx^*).
}
\end{align}
In Eqn.~\Cref{eqn:qm_x-Ax}, \emph{each local objective's gradient is implicitly weighted by a matrix} $\matM_i=[\matI - (\matI - \lr_i\matP_i \matH_i)^{\tau_i}]\matH_i^{-1}$. The aggregated local changes $\vx^* - \opA(\vx^*)=\sum_{i=1}^M \clientWeight_i \matM_i \nabla\obj_i(\vx^*)$ can be consisdered as a skewed version of the global gradient $\nabla \obj(\vx^*) = \sum_{i=1}^M \clientWeight_i \nabla \obj_i(\vx^*)=0$. Only if $\matM_i=\matM_j$ for all client pairs $i,j$, we can conclude that $\vx^*= \opA(\vx^*)$ and hence $\widetilde{\vx}= \vx^*$. However, due to the data heterogeneity, the condition $\matM_i=\matM_j$ does not hold. Furthermore, if we omit higher order terms $\lr_i^2$ in \Cref{eqn:qm_x-Ax}, then we get the following via Taylor approximation:
\begin{align}\label{eqn:qm_approx}
\textstyle
    [\vx^* - \opA(\vx^*)]/(\max_i \lr_i \tau_i) = \sum_{i=1}^M w_i \frac{\lr_i\tau_i\matP_i}{\max_i \lr_i \tau_i}\nabla\obj_i(\vx^*) + \mathcal{O}(\lr_i\localStep_i)
\end{align}
where the first term (on the right hand side) corresponds to the non-vanishing gap between $\widetilde{\vx}$ and $\vx^*$, and the second term can be omitted when the client learning rate is sufficiently small. It is worth noting that if $\lr_i=0$ or $\tau_i=0$, then $\vx=\opA(\vx)$ for any $\vx \in \mathbb{R}^d$, as there is no optimization progress at all. Therefore, in order to exclude this invalid solution, we divide $(\max_i \lr_i\tau_i)$ in \Cref{eqn:qm_approx}.

This toy example also illustrates how local adaptivity can accelerate convergence. In particular, when the local pre-conditioners are ideal (\ie inverse of local Hessians), \fedopt can converge to $\widetilde{\vx}$ in just one round by choosing proper server and client learning rates. But if the local pre-conditioner are the same across all clients ($\matP_i=\matP$), then the convergence of \fedopt will depend on the condition numbers of $\{\matP\matH_i\}$ and require multiple rounds.

\textbf{Connections with Previous Works on Minimizer Inconsistency.}
When client learning rates, number of local steps are the same across all clients, and non-adaptive, deterministic \textsc{ClientOpt} are used, $\| \widetilde{\vx} -\vx^* \|$ can vanish to zero along with the learning rates. This phenomenon has been observed and analyzed by few recent literature in different forms, see \citep{charles2021convergence,malinovskiy2020local,pathak2020fedsplit}. \Cref{thm:conv_fedopt} generalizes these results by allowing heterogeneous local hyper-parameters and adaptive, stochastic client optimizers. In addition, the non-vanishing bias was studied in \cite{wang2020tackling} by assuming different local learning rates and local steps at clients. In this paper, we further generalize the results by showing that even when the learning rates and local steps are the same, using local adaptive methods will lead to a non-vanishing gap.  We summarize the differences in \Cref{tab:comparison_recent_works} of \Cref{sec:comparison_recent_works}.

\section{Correction Techniques to Overcome the Non-vanishing Solution Bias}\label{sec:corrections}
As we discussed in \Cref{sec:inconsistency}, while using faster client optimizers and exploiting local structures can help improve the global convergence, this strategy results in a non-vanishing gap between the converged point $\widetilde{\vx}$ and the optimal solution $\vx^*$. Inspired by the quadratic example in \Cref{sec:inconsistency}, we propose (i) a local correction method that reweights local gradients before sending the client updates to the server in order to overcome the non-vanishing part of minimizer inconsistency, and (ii) a global correction method applied by the server when aggregating the client updates in order to preserve fast convergence. 

\textbf{Local Correction: Reweighting Local Gradients.}
From \Cref{eqn:qm_approx} in the quadratic example, one can observe that the non-vanishing minimizer inconsistency comes from the fact that \fedopt \emph{implicitly and improperly} weights each local gradients by a matrix $\matN_i=\lr_i\localStep_i\matP_i$. So a natural solution to overcome the non-vanishing bias is to normalize the local model changes before aggregating them on the server. Instead of sending $\vx - \opA_i(\vx)= \matN_i \nabla\obj_i(\vx) + \mathcal{O}(\lr_i^2)$ to the server, each client can send the locally normalized version $\matN_i^{-1}(\vx - \opA_i(\vx))$. This simple change can ensure that the first term in \Cref{eqn:qm_approx} is always zero no matter which value is the client learning rate. As a consequence, there is no non-vanishing solution bias. The local correction technique also automatically avoids the invalid solution $\vx = \opA(\vx), \forall \vx$ by dividing the client learning rate. 

The local correction technique can be extended to more general settings and work with any adaptive optimizers, such as \adagrad, \adam. If the local model changes of a specific client optimizer can be written as 
$\vx - \opA_i(\vx;\tau_i)=\lr_i\sum_{k=0}^{\localStep_i-1}\matB_i^{(k)} \nabla \obj_i(\vx^{(k)})$ where $\matB_i^{(k)}$ is symmetric and positive-definite and $\vx^{(k)}$ denotes the $k$-th iterate during local updates, then we can choose the local correction matrix to be $\matN_i=\lr_i\sum_{k=0}^{\localStep_i-1}\matB_i^{(k)}$ such that
\begin{align}\label{eqn:local_cor_intuition}
\textstyle
    \sum_{i=1}^M \clientWeight_i \matN_i^{-1}(\vx^* - \opA_i(\vx^*))
    = \underbrace{\textstyle{\sum_{i=1}^M} \clientWeight_i \nabla F_i(\vx^*)}_{=\nabla F(\vx^*)=0} + \mathcal{O}(\lr_i\localStep_i) = \mathcal{O}(\lr_i\localStep_i).
\end{align}
If the client optimizer is \adagrad, then $\matB_i^{(k)}$ is just the pre-conditioner $\matP_i^{(k)}$ used at the $k$-th local iteration; if the client optimizer is \adam, then $\matB_i^{(k)}$ is a weighted summation of local pre-conditioners. In \Cref{sec:pseudo-code}, we provide the pseudo-code of finding the expression of $\matN_i$ for common optimizers; and in \Cref{sec:conv_local_cor}, we formally prove that local correction can help \fedopt with deterministic client optimizers converge to the stationary points of the original objective function (even when it is non-convex). For stochastic client optimizers, we empirically validate the effectiveness of local correction through extensive experiments in \Cref{sec:exps}.

\textbf{Global Correction: Preserving the Fast Convergence.} 
Equation \Cref{eqn:local_cor_intuition} shows that using the local correction technique may change the scale of the aggregated model updates. In particular, the first-order approximation of the aggregated model updates just equals to plain gradients (the first term in \Cref{eqn:local_cor_intuition}) and may lose the local pre-conditioning effects. In order to address this problem, one can use either adaptive server optimizer that are more robust to the scale of its inputs, or a novel global correction technique that can help to preserve the scale of local updates. To be specific, in global correction, the server uses $\matN_s^{-1}\sum_{i=1}^M \clientWeight_i \matN_i^{-1}[\vx - \opA_i(\vx)]$ as the pseudo-gradient of \textsc{ServerOpt}, where $\matN_s$ is given as $\matN_s = \sum_{i=1}^M \clientWeight_i \matN_i^{-1}$. There may also exist other better choices of the matrix $\matN_s$. As long as $\matN_s$ does not depend on the client index $i$, it will not influence the fixed point, to which the algorithm converge. We name this technique as global correction, because it is applied on the server side. In order to obtain $\matN_s$, the clients need to send the \newadd{local correction matrices $\matN_i$ to the server}. But the server does not need to broadcast the correction matrix $\matN_s$ back. Therefore, if matrices $\{\matN_i\}$ are diagonal, then the communication cost per round of using global correction is only $1.5\times$ than that of without using it. \zheng{We should probably have one sentence talking about algorithm 1 somewhere.}

\section{Experiments}\label{sec:exps}
\newadd{In this section, we summarize the experimental results and validate the effectiveness of the proposed methods. Specifically, we show that \fedopt with local adaptivity and correction techqniues can achieve faster convergence, higher test accuracy, as well as more robustness against learning rate changes than previous server-only adaptive methods.}

We focus on three language-related tasks~\cite{reddi2020adaptive}, which have the favored sparse structures for adaptive methods: (i) Next word prediciton using RNN on Stack Overflow (SO NWP); (ii) next character prediction using RNN on Shakespeare (Shakes. NCP); (iii) tag prediction using linear regression on Stack Overflow (SO TP). For the first two tasks, we report the validation/test accuracy. For SO TP task, we report the validation/test Recall@5. Moreover, we also evaluate the methods in image classification task on the CIFAR100 dataset~\cite{krizhevsky2009learning}. The detailed descriptions (hyper-parameter tuning ranges and choices) of these federated training tasks are provided in \Cref{sec:exp_details}. 
Our implementation based on the \href{https://github.com/tensorflow/federated}{Tensorflow Federated (TFF)} package will be open-sourced. 
\textbf{Faster Convergence.} 
In \Cref{fig:adagrad_variants}, we first compare the training curves of different ways of using \adagrad in \fedopt on the SO NWP training task. One can either use \adagrad on clients, or on the server, or on both. It can be observed that using local \adagrad can significantly speedup the convergence compared to using vanilla SGD as the client optimizer. In particular, client-only \adagrad is even slightly faster than server-only \adagrad. When the local epoch increases from $1$ to $5$, the improvement of client-only \adagrad over server-only \adagrad becomes more obvious. If we apply \adagrad on both clients and the server, then \fedopt achieves the fastest convergence (about $4\times$ faster than server-only \adagrad to achieve $20\%$ validation accuracy in \Cref{fig:adagrad_variants}a).
\begin{figure}[!t]
    \centering
    \begin{subfigure}{.33\textwidth}
    \centering
    \includegraphics[width=\textwidth]{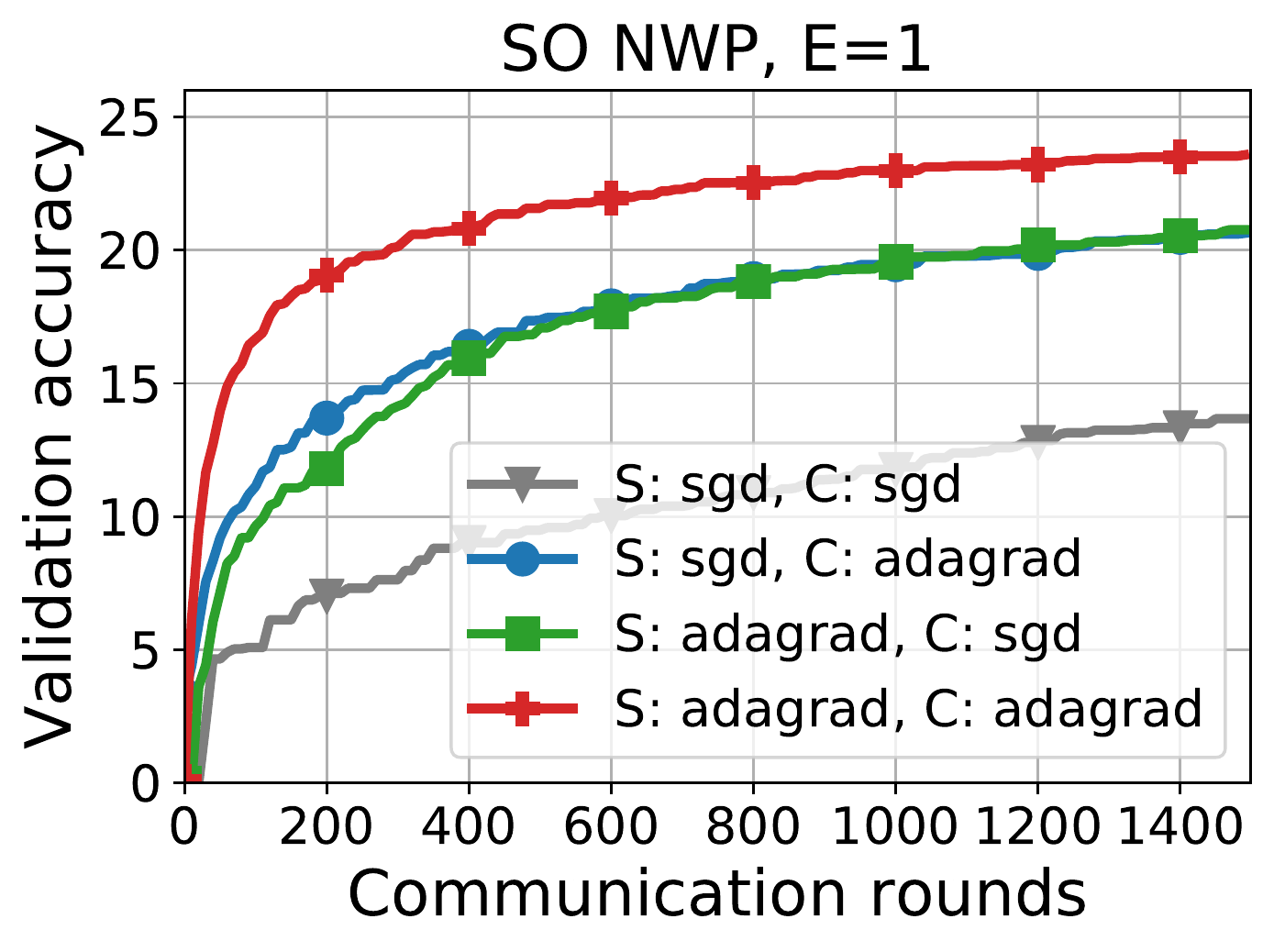}
    \caption{}
    \end{subfigure}%
    ~
    \begin{subfigure}{.33\textwidth}
    \centering
    \includegraphics[width=\textwidth]{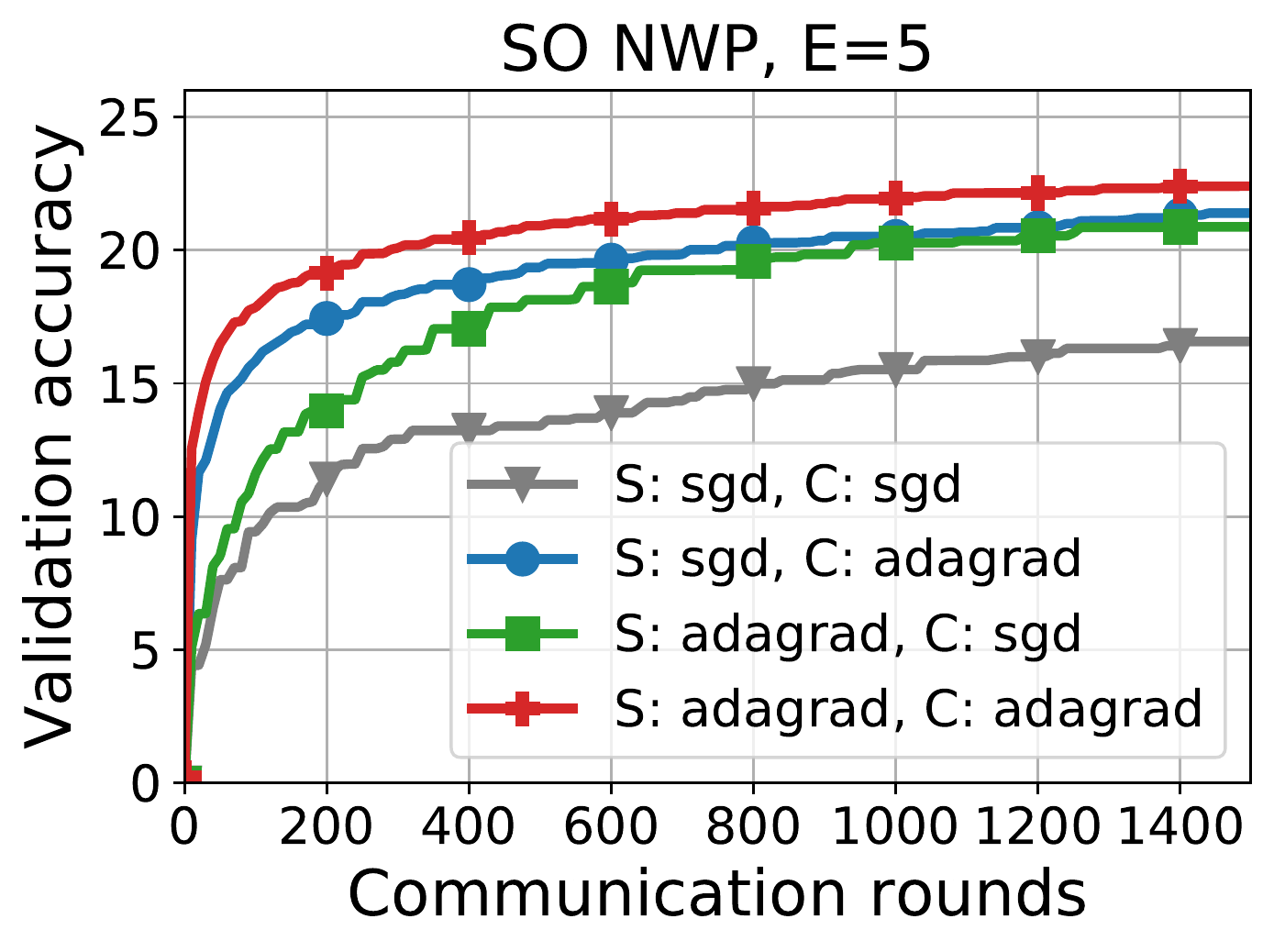}
    \caption{}
    \end{subfigure}%
    \caption{Example of training curves of \fedopt variants on the SO NWP task. The adaptive client optimizer is used with the proposed restarting strategy. 
    In the legend, ``C'' and ``S'' denote the \textsc{ClientOpt} and the \textsc{ServerOpt}, respectively. Using \adagrad on both clients and server is significantly faster than other variants.}
    \label{fig:adagrad_variants}
\end{figure}


\textbf{Higher Final Accuracy.} We further report the test accuracy or Recall@5 on different training tasks in \Cref{tab:final_test_accuracy}. Similar to the above discussions on faster convergence, changing \textsc{ClientOpt} from SGD to \adagrad and using the restarting strategy prposed in \cref{sec:simple_restart} consistently improves the test accuracy on multiple training tasks. And applying \adagrad on both clients and the server achieves the highest test accuracy. For example, when the \textsc{ServerOpt} is \adagrad, the test accuracy improves from $21.80\%$ to $24.40\%$ on SO NWP, and from $57.68\%$ to $57.85\%$ on Shakes. NCP. However, note that on the SO TP task, server \adagrad plus client \adagrad with the restarting strategy performs worse than server-only \adagrad ($65.79$ versus $66.39$). This performance degradation may come from the non-vanishing solution bias, as we discussed in \Cref{sec:inconsistency}.
\begin{table*}[!t]
\small
\centering
    \begin{subtable}[c]{0.333\textwidth}
    \centering
    \begin{tabular}{ rcc }
    \multicolumn{1}{r}{}
     &  \multicolumn{1}{c}{\scriptsize{C:SGD}}
     & \multicolumn{1}{c}{\scriptsize{C:\adagrad}} \\
    \scriptsize{S:SGD} & \cellcolor{orange!2}14.19 & \cellcolor{orange!12}21.68 \\
    \scriptsize{S:\adagrad} & \cellcolor{orange!50}21.80 & \cellcolor{orange!100}24.40 \\
    \end{tabular}
    \caption{SO NWP}
    \end{subtable}%
    \hfill
    \begin{subtable}[c]{0.333\textwidth}
    \centering
    \begin{tabular}{ rcc }
    \multicolumn{1}{r}{}
     &  \multicolumn{1}{c}{\scriptsize{C:SGD}}
     & \multicolumn{1}{c}{\scriptsize{C:\adagrad}} \\
     & \cellcolor{orange!2}57.12 & \cellcolor{orange!12}57.47 \\
     & \cellcolor{orange!50}57.68 & \cellcolor{orange!100}57.85 \\
    \end{tabular}
    \caption{Shakes. NCP}
    \end{subtable}%
    \hfill
    \begin{subtable}[c]{0.333\textwidth}
    \centering
    \begin{tabular}{ rcc }
    \multicolumn{1}{r}{}
     &  \multicolumn{1}{c}{\scriptsize{C:SGD}}
     & \multicolumn{1}{c}{\scriptsize{C:\adagrad}} \\
     & \cellcolor{orange!2}34.50 & \cellcolor{orange!12}39.67 \\
     & \cellcolor{orange!100}66.39 & \cellcolor{orange!50}65.79 \\
    \end{tabular}
    \caption{SO TP}
    \end{subtable}
    \caption{The test accuracy ($\%$) and recall@5 ($\times 100$) after $1500$ communication rounds of \fedopt variants on different training tasks. Darker color means better performance.}
    \label{tab:final_test_accuracy}
\end{table*}
\begin{table*}[t]
\small
\centering
    \begin{tabular}{c | c | c  c c c}                   \toprule
    \multirow{2}{*}{\textbf{Training Tasks}} & \multirow{2}{*}{\textbf{\textsc{ServerOpt}}} & \multicolumn{4}{c}{\textbf{\textsc{ClientOpt}}} \\
    & & SGD  & \adagrad & + Local Cor. & + Joint Cor.\\ \midrule
    SO NWP & \adam & $24.40$ & $24.70$ & $24.81$ & $\mathbf{24.85}$ \\ 
    Shakes. NCP & \adagrad & $57.68$ & $57.85$ & $57.75$ & $\mathbf{58.06}$  \\                     
    SO TP & \adagrad & $66.39$ & $65.79$ & $\mathbf{67.04}$ & $66.94$  \\
    \bottomrule
    \end{tabular}
    \caption{Comparison of our proposed methods (column 2,3,4) and the best server-only adaptive methods (column 1) for each training task. The table shows the test accuracy ($\%$) or recall@5 ($\times 100$) after $1500$ communication rounds. Bolded ones are the best results for each training tasks. In the table, we fix $\epsilon=10^{-7}$ in \textsc{ClientOpt} and tune $\epsilon_s$ in \textsc{ServerOpt}. Therefore, the performance of our proposed methods (column 2,3,4) can be further improved by tuning the $\epsilon$ parameter.}
    \label{tab:plus_correction}
\end{table*}

Then, we evaluate the proposed correction techniques in \Cref{tab:plus_correction} and compare the results with the best server-only adaptive methods on each training tasks (the best server optimizer is selected based on the results in \cite{reddi2020adaptive}). On SO TP, by overcoming the non-vanishing bias, client \adagrad with local correction can achieve much higher Recall@5 than vanilla SGD optimizer ($67.04$ versus $66.39$). Similar improvements also appear in SO NWP and Shakes. NCP tasks. Besides client \adagrad, our proposed methods also work for other common momentum-based adaptive optimizers (such as \adam and \textsc{Yogi}~\cite{zaheer2018adaptive}). We present the experimental results on SO NWP in \Cref{tab:client_adam_yogi}. It can be observed that our proposed method relatively improves the test accuracy of server \adam plus client SGD by $3.9\%$ ($25.35\%$ versus $24.40\%$), and vanilla \fedavg by $78.6\%$ ($25.35\%$ versus $14.19\%$).

\begin{table*}[!t]
\small
\centering
    \begin{tabular}{c c c c}                   \toprule
    \textbf{\textsc{ClientOpt}} &  \textbf{No Cor.} & \textbf{Local Cor.} & \textbf{Joint Cor.} \\ \midrule
    \textsc{Yogi}~\cite{zaheer2018adaptive} & $24.80$ & $25.29$ & $\textbf{25.33}$ \\
    \adam & $24.86$ & $25.15$ & $\textbf{25.35}$ \\
    \bottomrule
    \end{tabular}
    \caption{Performance (test accuracy ($\%$) after $1500$ rounds of training) of momentum-based adaptive client optimizers on SO NWP. The \textsc{ServerOpt} is fixed as \adam. Recall that the test accuracy of sever \adam plus client SGD is $24.40\%$.}
    \label{tab:client_adam_yogi}
\end{table*}

\textbf{Less Sensitive to Hyper-parameter Changes.}
In \Cref{fig:adagrad_heatmap}, we report how the test accuracy changes with server and client learning rates on the SO NWP task. We observe that for server-only \adagrad, there is only two out of $28$ learning rate combinations that can achieve a $20\%+$ test accuracy.
On the other hand, using \adagrad on both clients and server are more robust to the learning rate changes. There are $10$ out of $28$ combinations reaching a $20\%+$ test accuracy. The less sensitivity to learning rate changes can help people to save hyper-parameter tuning time in practice.
\begin{figure}[!t]
    \small
    \centering
    \begin{subfigure}{.33\textwidth}
    \centering
    \includegraphics[width=\textwidth]{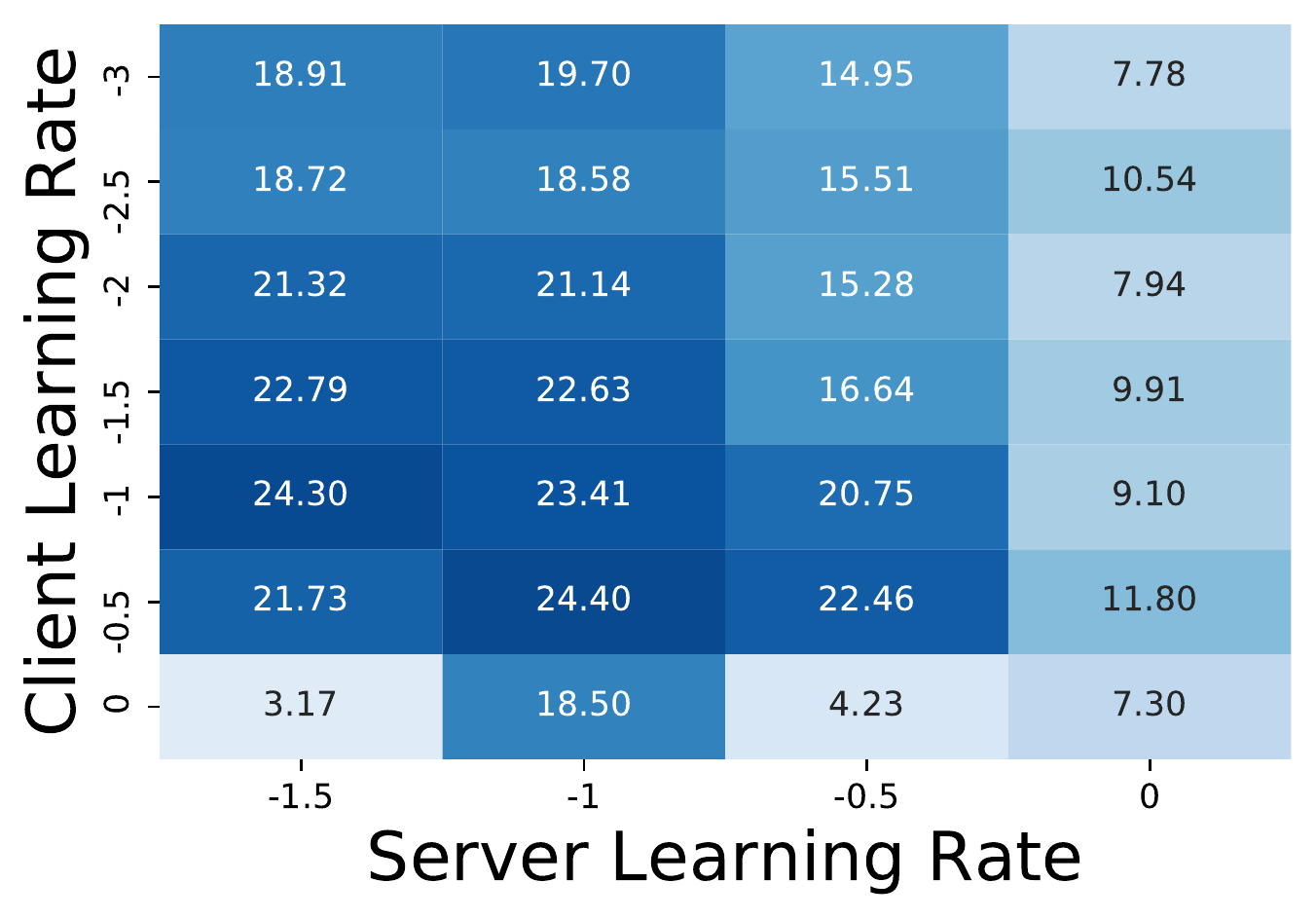}
    \caption{S: \adagrad, C: \adagrad}
    \end{subfigure}%
    ~
    \begin{subfigure}{.33\textwidth}
    \centering
    \includegraphics[width=\textwidth]{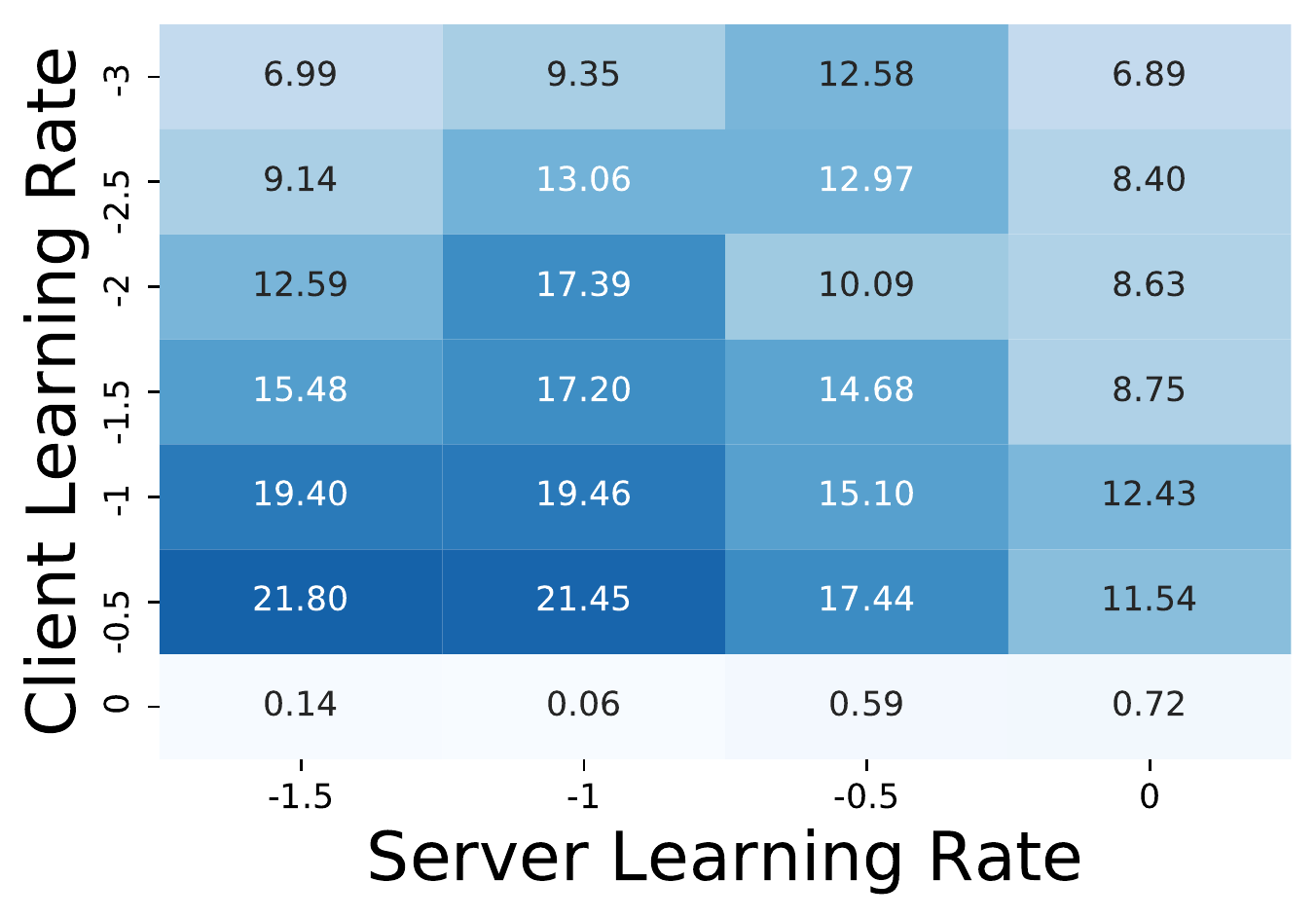}
    \caption{S: \adagrad, C: SGD}
    \end{subfigure}%
    \caption{How the test accuracy changes with server and client learning rate in \fedopt. Using local adaptive methods make the algorithm be more robust to the hyper-parameter changes.}
    \label{fig:adagrad_heatmap}
    \vspace{-2em}
\end{figure}

\textbf{Comparison with the Synchronizing States Strategy.} We further compare the proposed restarting optimizer states strategy to a synchronizing strategy where local pre-conditioners are synchronized after each round. Surprisingly, the additional communication in the synchronized strategy does not help and achieves even worse accuracy. For instance, on Shakes. NCP, server \adagrad plus \newadd{synchronized} client \adagrad has a test accuracy of $56.89\%$, which is lower than that of the restarting one ($57.85\%$).

\newadd{\textbf{Results on Image Classification Task.}
At last, we examine the performance of the proposed methods on pathological version of CIFAR100, following the setup in \cite{reddi2020adaptive}. The observations are similar to language related tasks. Naively using client adaptive methods in \fedopt is not guaranteed to have better performance. But when combining local adaptvity with the proposed correction techniques, the performance consistently improves. Specifically, after $8000$ communication rounds, while server \adam plus client \adagrad has worse test accuracy than server-only \adam ($54.68\%$ vs $56.64\%$), using local correction can improve the accuracy to $56.80\%$.}

\section{Conclusions}
In this paper, we first propose techniques that enable the use of adaptive optimization methods for local updates at clients in federated learning. Through the analysis on effects of client optimizers for smooth and strongly convex functions, we show that although local adaptive methods can accelerate the convergence in some scenarios, it introduces an additional non-vanishing gap between the converged point and the optimal solution. To mitigate the side effects of using local adaptivity, we further propose local and global correction techniques. We verified the advantages of the proposed methods regarding fast convergence, better test accuracy with extensive experiments on benchmark datasets. 

\section*{Acknowledgements}
The authors would like to thank Galen Andrew and Brendan McMahan for helpful discussions.
This research was generously supported in part by NSF grants CCF-1850029 and CCF-2045694, and a Google Computing Platform (GCP) Credit grant.

\bibliographystyle{unsrtnat}
{\bibliography{references.bib}}

\newpage
\appendix
\newpage
\onecolumn
\section{Experimental Details}\label{sec:exp_details}
In this section, we describe our experimental setup in more detail. A brief summary of tunning ranges and default values for hyper-parameters are summarized in \Cref{tab:exp_setting}.
\begin{table*}[!htp]
    \centering
    \begin{tabular}{r c c c}\toprule
        \textbf{Tasks} & SO NWP & Shakes. NCP & SO TP\\
        \textbf{Models} & RNN & RNN & LR \\
        \textbf{Total number of clients} & $342477$ & $715$ & $342477$\\
        \textbf{Maximal elements per client} & $128$ & N/A & $1000$\\\midrule
        \textbf{Active clients per round} & $50$ & $10$ & $10$\\
        \textbf{Mini-batch size} & $16$ & $4$ & $100$ \\
        \textbf{Default Local epochs} & $1$ & $1$ & $1$\\ 
        \textbf{Total rounds} & $1500$ & $1500$ & $1500$\\ \midrule
        \textbf{Client learning rate} $\log_{10}(\lr)$ & \multicolumn{2}{l}{$\{-3, -2.5, \dots, -1, 0, 0.5\}$} & $\{-1,-0.5,\dots,2,2.5\}$ \\
        \textbf{Server learning rate} $\log_{10}(\slr)$ & \multicolumn{2}{l}{$\{-1.5, -1, -0.5, 0, 0.5, 1\}$} & $\{-0.5, 0, 0.5, 1, 1.5\}$\\
        \textbf{Parameter $\epsilon_s$ in \textsc{ServerOpt}} & \multicolumn{3}{c}{$\log_{10}(\epsilon_s)=\{-7, -5, -3, -1\}$}\\
        \textbf{Parameter $\epsilon$ in \textsc{ClientOpt}} & \multicolumn{3}{c}{$\log_{10}(\epsilon)=\{-7\}$}\\\bottomrule
    \end{tabular}
    \caption{Experimental settings, default values and tuning ranges of hyper-parameters.}
    \label{tab:exp_setting}
\end{table*}

\subsection{Training Tasks}
\paragraph{Next-Word Prediction on Stack Overflow (SO NWP for short).}
Stack Overflow is a language modeling dataset from the question and answer site, Stack Overflow. The datasets consists of questions and answers from $342,477$ unique users, each of which is treated as a client in our experiments. We perform next-word prediction task on this dataset and restrict to the 10,000 most frequently used words. The preprocess procedure of this datasets follows \cite{reddi2020adaptive}. Specifically, we let each client only use the first 128 sentences of its local dataset, in order to avoid that clients have extremely different amount of data. Padding and truncation are used to ensure that sentences have 20 words. The metrics we report are the top-1 validation and test accuracy over the $10,000$-word vocabulary. It does not include padding, out-of-vocab, or beginning or end or sentence tokens. The neural network model we use is a RNN with single LSTM layer, which is the same as that of \cite{reddi2020adaptive}.

\paragraph{Tag Prediction on Stack Overfolow (SO TP for short).}
Tag prediction via logistic regression is another training task on the Stack Overflow datasets. It has the same number of clients and vocabulary size as SO NWP. Besides, following the setup in \citep{reddi2020adaptive}, we use the $500$ most frequent tags and a one-versus-rest classification strategy.

\paragraph{Next Character Prediction on Shakespeare (Shakes. NCP for short).}
Shakespeare is another language modeling dataset built from the works from William Shakespeare. Each client corresponds to a speaking role with at least two lines. The lines of each speaking role is splitted into sequences of 80 characters, padding if necessary. The vocabulary size is 90. The neural networks model we use is the same as \cite{reddi2020adaptive}: a RNN with two LSTM layers.

\subsection{Best Performing Hyper-parameters}
We report the best performed hyper-parameters in \Cref{tab:best_hyper_params}. The hyper-parameters are selected in a way such that the average validation accuracy (or recall@5) over the last $100$ rounds achieves the highest value.
\begin{table*}[th]
\small
\centering
    \begin{tabular}{c | c | c c c c}                   \toprule
    \multirow{2}{*}{\textbf{Training Tasks}} & \multirow{2}{*}{\textbf{\textsc{ServerOpt}}} & \multicolumn{4}{c}{\textbf{\textsc{ClientOpt}}} \\\cmidrule(lr){3-6}
    & & SGD  & \adagrad & + Local Cor. & + Joint Cor.\\ \midrule 
    \multirow{3}{*}{SO NWP} & SGD & $(-0.5, 0, \text{NA})$ & $(-0.5,0,\text{NA})$ & $(0,0.5,\text{NA})$ & $(-0.5,0,\text{NA})$ \\
                            & \adagrad & $(-0.5,-1.5,-3)$ & $(-0.5,-1,-5)$ & $(-0.5,-1,-5)$ & $(-0.5,-1,-7)$  \\
                            & \adam & $(-0.5,-1.5,-5)$ & $(-0.5,-1.5,-3)$ & $(-1,-1.5,-5)$ & $(-1,-1.5,-5)$ \\ \midrule
    \multirow{2}{*}{Shakes. NCP} & SGD & $(0,0,\text{NA})$ & $(-0.5,-0.5,\text{NA})$ & $(-0.5,2,\text{NA})$ & $(-0.5,0,\text{NA})$   \\
                            & \adagrad & $(0.5,-1,-1)$ & $(0,-1,-3)$ & $(-0.5,-1,-5)$ & $(-0.5,-0.5,-1)$  \\\midrule                        
    \multirow{3}{*}{SO TP} & SGD & $(2.5,0,\text{NA})$ & $(1.5,0,\text{NA})$ & $(1.5,1.5,\text{NA})$ & $(1.5,0,\text{NA})$   \\
                            & \adagrad & $(0.5,1,-5)$ & $(-0.5,1,-5)$ & $(-0.5,1,-7)$ & $(-0.5,1,-5)$  \\
                            & \adam & $(1.5,-0.5,-5)$ & $(1,-0.5,-5)$ & $(0.5,-0.5,-7)$ & $(1,-0.5,-5)$  \\
    \bottomrule
    \end{tabular}
    \caption{Best performed hyper-parameters in all training tasks. The tuple in each cell corresponds to $(\lr, \slr, \epsilon_s)$: client learning rate, server learning rate, and adaptivity parameter $\epsilon_s$ in the \textsc{ServerOpt}.}
    \label{tab:best_hyper_params}
\end{table*}



\subsection{Pseudo-Codes of The Proposed Algorithms}\label{sec:pseudo-code}
We present the pseudo-codes for our proposed algorithms in \Cref{algo:local_adaptivity}. It is worth noting that \Cref{algo:local_adaptivity} only provides a concrete specification of the proposed local and global correction techniques. Beyond our choices, there may exist many other algorithmic variants.
\begin{algorithm}[!ht]
\small
    \DontPrintSemicolon
    \SetKwInput{Input}{Input}
    \SetAlgoLined
    \LinesNumbered
    \begin{algorithmic}[1]
    \STATE {\bfseries Input:} Initial model $\vx^{(0)}$, \textsc{ClientOpt}, \textsc{ServerOpt}
     \FOR{$t \in \{0,1,\dots,T-1\}$ }
      \FOR{{\bfseries client $i$ in the random set $\activeClients^{(t)}$ in parallel}}
        \STATE Initialization $\vx_i^{(t,0)}=\vx^{(t)}$, \colorbox{blue!25}{$\bm{m}_i^{(t,-1)} = 0, \matP_i^{(t,-1)}=c$}, \colorbox{yellow!25}{$ \bm{N}_i^{(t)} = 0, \matM_i^{(t,-1)}=0$}
        \FOR {$k \in \{0,\dots,\localStep_i-1\}$}
            \STATE Compute local stochastic gradient $\sgrad_i(\vx_i^{(t,k)})$ and update local pre-conditioner $\matP_i^{(t,k)}$
            \STATE Update momentum $\bm{m}_i^{(t,k)} = \beta_1\bm{m}_i^{(t,k-1)} + (1-\beta_1)\sgrad_i(\vx_i^{(t,k)})$
            \STATE Update local model $\vx_i^{(t,k+1)} = \vx_i^{(t,k)} - \lr\matP_i^{(t,k)}\bm{m}_i^{(t,k)}$
            \STATE \colorbox{yellow!25}{$\matM_i^{(t,k)} = \beta_1\matM_i^{(t,k-1)} + (1-\beta_1)\matP_i^{(t,k)}$}
            \STATE \colorbox{yellow!25}{$\bm{N}_i^{(t)} \leftarrow \bm{N}_i^{(t)} + \matM_i^{(t,k)}$}
        \ENDFOR
        \STATE Local changes $\localChange_i^{(t)} =  \vx^{(t)} - \vx_i^{(t,\localStep_i)}$
        \STATE \colorbox{yellow!25}{$\localChange_i^{(t)} \leftarrow (\bm{N}_i^{(t)})^{-1}\localChange_i^{(t)}$}
      \ENDFOR
      \STATE Aggregate $\localChange^{(t)} = \frac{1}{|\activeClients^{(t)}|}\sum_{i \in \activeClients^{(t)}} \localChange_i^{(t)}$
      \STATE \colorbox{green!25}{Aggregate $\matN_s^{(t)} = \frac{1}{|\activeClients^{(t)}|}\sum_{i \in \activeClients^{(t)}}(\bm{N}_i^{(t)})^{-1}$ and set $\Delta^{(t)} \leftarrow (\matN_s^{(t)})^{-1}\Delta^{(t)}$}
      \STATE $\vx^{(t+1)} = \textsc{\textbf{ServerOpt}}(\vx^{(t)}, \localChange^{(t)},\slr,t)$
     \ENDFOR
     \caption{Local adaptive \fedopt with the \colorbox{blue!25}{restarting strategy} and \colorbox{yellow!25}{local correction}, \colorbox{green!25}{global correction} techniques}
     \label{algo:local_adaptivity}
    \end{algorithmic}
\end{algorithm}

\section{Justifications of \Cref{assump:contractive_op,assump:bnd_var} for Vanilla SGD Client Optimizer}\label{sec:justify_assump}
\subsection{Two Useful Lemmas}
Before we dive into the proofs, we would like to first introduce two useful lemmas, which will be repeatedly applied later on.
\begin{lem}
Suppose function $F$ is twice-differentiable. Then, for any points $\vx, \vy \in \mathbb{R}^d$, we have
\begin{align}
    \nabla F(\vx) - \nabla F(\vy) = \matH (\vx - \vy)
\end{align}
where $\matH = \int_0^1 \nabla^2 F(\vy + s(\vx - \vy)) \mathrm{d}s$. 
\end{lem}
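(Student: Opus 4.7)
The plan is to prove this as a direct application of the fundamental theorem of calculus along a line segment, combined with the chain rule. This is a standard integral form of the mean value theorem for vector-valued gradients.

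First I would define the auxiliary vector-valued function $\phi: [0,1] \to \mathbb{R}^d$ by $\phi(s) = \nabla F(\vy + s(\vx - \vy))$. By construction, $\phi(0) = \nabla F(\vy)$ and $\phi(1) = \nabla F(\vx)$, so the quantity of interest equals $\phi(1) - \phi(0)$. Since $F$ is twice-differentiable, $\phi$ is continuously differentiable, so I can apply the fundamental theorem of calculus componentwise to get
\begin{equation*}
\nabla F(\vx) - \nabla F(\vy) = \phi(1) - \phi(0) = \int_0^1 \phi'(s)\, \mathrm{d}s.
\end{equation*}

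Next I would compute $\phi'(s)$ by the chain rule. The inner map $s \mapsto \vy + s(\vx - \vy)$ has derivative $\vx - \vy$, while the outer map is the gradient field $\nabla F$ whose Jacobian is the Hessian $\nabla^2 F$. Hence
\begin{equation*}
\phi'(s) = \nabla^2 F(\vy + s(\vx - \vy)) \, (\vx - \vy).
\end{equation*}
Substituting and pulling the constant vector $(\vx - \vy)$ outside the integral (this is valid because integration of a matrix-valued function against a fixed vector commutes with the integral, entry by entry) yields
\begin{equation*}
\nabla F(\vx) - \nabla F(\vy) = \left[\int_0^1 \nabla^2 F(\vy + s(\vx - \vy))\, \mathrm{d}s\right](\vx - \vy) = \matH (\vx - \vy),
\end{equation*}
which is exactly the claim.

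There is no real obstacle here; the only points that deserve a brief remark are (i) justifying the interchange of the integral with the constant linear factor $(\vx - \vy)$, which is immediate because the integrand depends linearly on this vector, and (ii) noting that twice-differentiability of $F$ ensures $\phi'$ is (at least Riemann) integrable on $[0,1]$, so the fundamental theorem of calculus applies. If the paper later needs a version under weaker smoothness, one can instead invoke absolute continuity of $\phi$, but under the stated hypothesis the straightforward argument above suffices.
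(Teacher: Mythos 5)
Your proof is correct and follows the same route as the paper's: both apply the fundamental theorem of calculus to $s \mapsto \nabla F(\vy + s(\vx-\vy))$ and pull the constant vector $\vx-\vy$ out of the integral by linearity. The paper states this in a single line; your version simply spells out the chain-rule computation and the justification for interchanging the integral with the linear factor.
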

\begin{proof} Due to the linearity of integral, we have
\begin{align}
    \nabla F(\vx) - \nabla F(\vy) = \int_0^1 \nabla^2 F(\vy + s(\vx - \vy)) (\vx- \vy)\mathrm{d}s = \matH (\vx - \vy).
\end{align}
\end{proof}

\begin{lem}
Suppose $\vx$ and $\vy$ are random variables with the same random sources $\xi$. Then, if a function $F$ is $L$-lipschitz smooth and $\mu$-strongly convex, then we have
\begin{align}
    \vecnorm{\Exs[\nabla \obj(\vx) - \nabla \obj(\vy)]}^2 \geq& \mu^2\vecnorm{\Exs[\vx - \vy]}^2 \label{eqn:lem2_1}\\
    \vecnorm{\Exs[\nabla \obj(\vx) - \nabla \obj(\vy)]}^2 \leq& L \inprod{\Exs[\vx - \vy]}{\Exs[\nabla \obj(\vx) - \nabla \obj(\vy)]} \label{eqn:lem2_2}\\
    (\mu + L)\inprod{\Exs[\nabla F(\vx) - \nabla F(\vy)]}{\Exs[(\vx - \vy)]}
    \geq& \vecnorm{\Exs[\nabla F(\vx) - \nabla F(\vy)]}^2 + \mu L \vecnorm{\Exs[\vx - \vy]}^2 \label{eqn:lem2_3}
\end{align}
\end{lem}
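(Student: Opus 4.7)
The plan is to reduce the three inequalities to spectral bounds on an ``effective Hessian'' delivered by Lemma~1, after which each becomes an elementary linear-algebra fact about a symmetric positive semidefinite matrix whose eigenvalues lie in $[\mu, L]$.

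First I would invoke Lemma~1 pointwise in the shared randomness $\xi$ to write $\nabla F(\vx) - \nabla F(\vy) = \matH(\vx, \vy)(\vx - \vy)$, where $\matH(\vx, \vy) := \int_0^1 \nabla^2 F(\vy + s(\vx - \vy))\,\mathrm{d}s$. Strong convexity and smoothness give $\mu \matI \preceq \nabla^2 F(\cdot) \preceq L \matI$ everywhere, and integration in $s$ preserves these bounds, so $\mu\matI \preceq \matH(\vx,\vy) \preceq L\matI$ almost surely. Taking expectations yields $\Exs[\nabla F(\vx) - \nabla F(\vy)] = \Exs[\matH(\vx,\vy)(\vx - \vy)]$.

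Next, each inequality becomes a statement about a symmetric PSD matrix acting on $\bm{z} := \Exs[\vx - \vy]$. In the quadratic regime emphasized in \Cref{sec:inconsistency} (and most directly relevant to how this lemma is invoked downstream), $\matH(\vx,\vy) \equiv \matH$ is a deterministic constant matrix, so the expectation factors as $\matH\bm{z}$ and the three claims reduce to: (1) $\|\matH\bm{z}\|^2 \geq \mu^2\|\bm{z}\|^2$ from $\matH^2 \succeq \mu^2\matI$; (2) $\|\matH\bm{z}\|^2 \leq L\,\bm{z}^\top\matH\bm{z}$ from $\matH^2 \preceq L\matH$; and (3) $(\mu+L)\bm{z}^\top\matH\bm{z} \geq \|\matH\bm{z}\|^2 + \mu L\|\bm{z}\|^2$ from $(\matH - \mu\matI)(L\matI - \matH) \succeq 0$. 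All three semidefinite facts follow eigenvalue-by-eigenvalue from the scalar inequalities $x^2 \geq \mu^2$, $x^2 \leq Lx$, and $(x-\mu)(L-x) \geq 0$ valid for $x \in [\mu, L]$.

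The main obstacle is the general non-quadratic case, where $\matH(\vx,\vy)$ is a random matrix correlated with $\vx - \vy$ through $\xi$, so the expectation does not factor. My plan there is to define $\widetilde{\matH} := \Exs[\matH(\vx,\vy)]$, verify via Jensen on the PSD cone that $\widetilde{\matH}$ inherits the spectral bounds $\mu\matI \preceq \widetilde{\matH} \preceq L\matI$, and split
\begin{equation*}
\Exs[\matH(\vx,\vy)(\vx - \vy)] \;=\; \widetilde{\matH}\,\Exs[\vx - \vy] \;+\; \Exs\bigl[(\matH(\vx,\vy) - \widetilde{\matH})(\vx - \vy)\bigr],
\end{equation*}
reducing the analysis to the quadratic case plus a cross-term. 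Controlling the sign of this cross-term for the $\|\Exs[\cdot]\|^2$-type bounds (1) and (3) will be the hard step, since a naive Jensen argument goes in the wrong direction; I expect the argument to lean on additional structure on the joint law of $(\vx,\vy,\xi)$ that the downstream application (local SGD on a quadratic objective) actually supplies.
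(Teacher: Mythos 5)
Your reduction is exactly the one the paper uses: its Lemma~1 writes $\nabla F(\vx)-\nabla F(\vy)=\matH(\vx-\vy)$ with $\matH=\int_0^1\nabla^2F(\vy+s(\vx-\vy))\,\mathrm{d}s$ satisfying $\mu\matI\preccurlyeq\matH\preccurlyeq L\matI$ almost surely, and your three spectral facts $\matH^2\succcurlyeq\mu^2\matI$, $\matH^2\preccurlyeq L\matH$, and $(\matH-\mu\matI)(L\matI-\matH)\succcurlyeq 0$ dispose of \Cref{eqn:lem2_1}, \Cref{eqn:lem2_2}, \Cref{eqn:lem2_3} whenever $\matH$ is deterministic (the quadratic case). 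The genuine gap is the one you flag and then leave open: for non-quadratic $F$, $\matH$ is a random matrix correlated with $\vx-\vy$ through $\xi$, and your plan to control the cross-term $\Exs[(\matH-\widetilde{\matH})(\vx-\vy)]$ by appealing to unspecified ``additional structure of the joint law'' is not carried out. As written, the proposal is not a complete proof of the lemma at the stated level of generality.

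That said, your diagnosis of where the difficulty sits is accurate, and the paper's own proof does not resolve it either: it passes from $\Exs\inprod{\matH(\vx-\vy)}{\Exs[\matH(\vx-\vy)]}$ to the sandwich between $\mu\inprod{\Exs[\vx-\vy]}{\Exs[\matH(\vx-\vy)]}$ and $L\inprod{\Exs[\vx-\vy]}{\Exs[\matH(\vx-\vy)]}$ ``since $\mu\preccurlyeq\matH\preccurlyeq L$'', but the quantity being bounded is a mixed bilinear form $\Exs[(\vx-\vy)\tp\matH\bm{c}]$ with $\bm{c}=\Exs[\matH(\vx-\vy)]$ a fixed vector generally different from $\vx-\vy$, and operator bounds on $\matH$ do not sandwich such forms. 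In fact no fix exists without extra hypotheses: in one dimension take $F''=L$ on $[0,1]$ and $F''=\mu$ on $[2,3]$, and let $(\vy,\vx)$ equal $(0,1)$ or $(3,2)$ with probability $1/2$ each; then $\Exs[\vx-\vy]=0$ while $\Exs[\nabla F(\vx)-\nabla F(\vy)]=(L-\mu)/2\neq 0$, violating \Cref{eqn:lem2_2} and \Cref{eqn:lem2_3} outright. So the lemma really does require either the quadratic/deterministic-$\matH$ setting you isolate, or a hypothesis decoupling the integrated Hessian from the displacement, under which your decomposition closes trivially because the cross-term vanishes. Absent such a hypothesis, neither your argument nor the paper's is complete.
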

\begin{proof}
Based on lemma 1, we have $\Exs[\nabla F(\vx) - \nabla F(\vy)] = \Exs[\matH (\vx - \vy)]$. And hence, 
\begin{align}
    \vecnorm{\Exs[\nabla F(\vx) - \nabla F(\vy)]}^2
    =& \inprod{\Exs[\matH (\vx - \vy)]}{\Exs[\matH (\vx - \vy)]} \\
    =& \Exs\inprod{\matH (\vx - \vy)}{\Exs[\matH (\vx - \vy)]}. \label{eqn:lem2_step1}
\end{align}
Since $\mu \preccurlyeq \matH \preccurlyeq L$, it follows that
\begin{align}
    \mu \inprod{\Exs[\vx-\vy]}{\Exs[\matH (\vx - \vy)]} \leq \Exs\inprod{\matH (\vx - \vy)}{\Exs[\matH (\vx - \vy)]} \leq L\inprod{\Exs[\vx-\vy]}{\Exs[\matH (\vx - \vy)]}.\label{eqn:lem2_step2}
\end{align}
Combining \Cref{eqn:lem2_step1,eqn:lem2_step2}, we prove the second inequality in Lemma 2. Furthermore, note that
\begin{align}
    \inprod{\Exs[\vx-\vy]}{\Exs[\matH (\vx - \vy)]}
    = \Exs\inprod{\Exs[\vx-\vy]}{\matH (\vx - \vy)}
    \geq \mu \Exs\inprod{\Exs[\vx-\vy]}{(\vx - \vy)}.
\end{align}
We have
\begin{align}
    \inprod{\Exs[\matH (\vx - \vy)]}{\Exs[\matH (\vx - \vy)]} \geq \mu^2 \inprod{\Exs[\vx-\vy]}{\Exs[\vx-\vy]}.
\end{align}
This completes the proof of the first inequality in Lemma 2. Using the similar technique as above, one can prove that
\begin{align}
    \vecnorm{\Exs[H(\vx - \vy) - \mu (\vx - \vy)]}^2
    \leq& (L - \mu) \inprod{\Exs[\vx- \vy]}{\Exs[(\matH - \mu\matI)(\vx - \vy)]}. \label{eqn:lem2_step3}
\end{align}
Besides, note that
\begin{align}
    \vecnorm{\Exs[H(\vx - \vy) - \mu (\vx - \vy)]}^2
    =& \vecnorm{\Exs[H(\vx- \vy)]}^2 + \mu^2 \vecnorm{\Exs[\vx - \vy]}^2 \nonumber \\
        &- 2\mu \inprod{\Exs[\matH(\vx - \vy)]}{\Exs[\vx - \vy]}. \label{eqn:lem2_step4}
\end{align}
Substituting \Cref{eqn:lem2_step4} into \Cref{eqn:lem2_step3} and rearranging, we obtain that
\begin{align}
    (L+\mu)\inprod{\Exs[\matH(\vx-\vy)]}{\Exs[\vx - \vy]}
    \geq \vecnorm{\Exs[\matH(\vx-\vy)]}^2 + \mu L \vecnorm{\Exs[\vx - \vy]}^2.
\end{align}
Here we complete the proof of the last inequality in Lemma 2.
\end{proof}

\subsection{Proof of Assumption 1 for Vanilla SGD Client Optimizer}
For the ease of writing, we denote $\vx^{(k)}=\opA_i(\vx;k,\Xi)$ and $\vy^{(k)}=\opA_i(\vy;k,\Xi)$ where $\Xi=\{\xi^{(0)},\dots,\xi^{(k-1)}\}$ represents the random sources through $k$ local steps. According to the update rule of SGD, we have
\begin{align}
    \Exs[\vx^{(k+1)}] - \Exs[\vy^{(k+1)}]
    =& \Exs[\vx^{(k)} - \lr g_i(\vx^{(k)})] - \Exs[\vy^{(k)} - \lr g_i(\vy^{(k)})] \\
    =& \Exs[\vx^{(k)} - \vy^{(k)}] - \lr_i \Exs[\nabla F_i(\vx^{(k)}) - \nabla F_i(\vy^{(k)})].
\end{align}
We are going to use \emph{induction} to prove $\vecnorm{\Exs[\vx^{(k)} - \vy^{(k)}]}^2 \leq (1-\lr_i \mu_i)^{2k}\vecnorm{\vx - \vy}^2$. When $k = 1$, we have
\begin{align}
     \vecnorm{\Exs[\vx^{(1)} - \vy^{(1)}]}^2
    =& \vecnorm{\vx - \vy - \lr_i \nabla F_i(\vx) - \nabla F_i(\vy)}^2 \\
    =& \vecnorm{\vx - \vy}^2 + \lr_i^2 \vecnorm{\nabla F_i(\vx) - \nabla F_i(\vy)}^2  -2 \lr_i \inprod{\vx - \vy}{\nabla F_i(\vx) - \nabla F_i(\vy)} \\
    \leq& \parenth{1 - \frac{2\mu_i L_i}{\mu_i + L_i}}\vecnorm{\vx - \vy}^2 + \parenth{\lr_i^2 - \frac{2\lr_i}{\mu_i + L_i}}\vecnorm{\nabla F_i(\vx) - \nabla F_i(\vy)}^2
\end{align}
where the last inequality comes from the Lipschitz smoothness and strongly convexity of $F_i$. When the client learning rate satisfies $\lr_i(\mu_i + L_i) < 2$, we have
\begin{align}
\vecnorm{\Exs[\vx^{(1)} - \vy^{(1)}]}^2
\leq& \parenth{1 - \frac{2\mu_i L_i}{\mu_i + L_i}}\vecnorm{\vx - \vy}^2 + \parenth{\lr_i^2\mu_i^2 - \frac{2\mu_i\lr_i}{\mu_i + L_i}}\vecnorm{\vx - \vy}^2 \\
=& (1-\lr_i\mu_i)^2 \vecnorm{\vx - \vy}^2.
\end{align}
So Assumption 1 is true when $k =1$. Now we assume that Assumption 1 holds for some $k >1$ and examine the value of $\vecnorm{\Exs[\vx^{(k+1)} - \vy^{(k+1)}]}$. In particular, we have
\begin{align}
    \vecnorm{\Exs[\vx^{(k+1)} - \vy^{(k+1)}]}^2
    =& \vecnorm{\Exs[\vx^{(k)} - \vy^{(k)}]}^2 + \lr_i^2 \vecnorm{\Exs[\nabla F_i(\vx^{(k)}) - \nabla F_i(\vy^{(k)})]}^2 \nonumber \\
        &- 2\lr \inprod{\Exs[\vx^{(k)} - \vy^{(k)}]}{\Exs[\nabla F_i(\vx^{(k)}) - \nabla F_i(\vy^{(k)})]}.
\end{align}
According to \Cref{eqn:lem2_1,eqn:lem2_3} of Lemma 2, we have
\begin{align}
    \vecnorm{\Exs[\vx^{(k+1)} - \vy^{(k+1)}]}^2
    \leq& \parenth{1- \frac{2\lr_i \mu_i L_i}{\mu_i L_i}}\vecnorm{\Exs[\vx^{(k)} - \vy^{(k)}]}^2 \nonumber \\
        &+ \parenth{\lr_i^2 - \frac{2\lr_i}{\mu_i + L_i}}\vecnorm{\Exs[\nabla F_i(\vx^{(k)}) - \nabla F_i(\vy^{(k)})]}^2 \\
    \leq& \parenth{1- \frac{2\lr_i \mu_i L_i}{\mu_i L_i}}\vecnorm{\Exs[\vx^{(k)} - \vy^{(k)}]}^2 \nonumber \\
        &+ \parenth{\lr_i^2\mu_i^2 - \frac{2\lr_i\mu_i^2}{\mu_i + L_i}}\vecnorm{\Exs[\vx^{(k)} - \vy^{(k)}]}^2 \\
    \leq& (1-\lr_i\mu_i)^2 \vecnorm{\Exs[\vx^{(k)} - \vy^{(k)}]}^2 \\
    \leq& (1-\lr_i\mu_i)^{2(k+1)}\vecnorm{\vx - \vy}^2.
\end{align}
Here we complete the induction procedure and prove that Assumption 1 holds for vanilla SGD client optimizer and $h_i(k) = (1-\lr_i \mu_i)^{2k}$.

\subsection{Proof of Assumption 2 for Vanilla SGD Client Optimizer}
For the ease of writing, we define $\vx^{(k)} = \opA_i(\vx;k)$ and $\overline{\vx}^{(k)} = \Exs[\vx^{(k)}]$. We are going to use induction to prove that
\begin{align}
    \Exs\vecnorm{\vx^{(k)} - \overline{\vx}^{(k)}}^2 \leq k \lr_i^2 \sigma^2.
\end{align}
When $k = 1$, we have
\begin{align}
    \Exs\vecnorm{\vx^{(1)} - \overline{\vx}^{(1)}}^2
    = \Exs\vecnorm{-\lr_i g_i(\vx;\xi) + \lr_i\nabla F_i(\vx)}^2 
    \leq \lr_i^2 \sigma^2.
\end{align}
We assume that \Cref{assump:bnd_var} holds for some $k > 1$. Then, according to the update rule of SGD, we have
\begin{align}
    &\Exs\vecnorm{\vx^{(k+1)} - \Exs[\vx^{(k+1)}]}^2 \nonumber \\
    =& \Exs\vecnorm{\vx^{(k)} - \lr_i g_i(\vx^{(k)}) - \Exs[\vx^{(k)}] + \lr_i\Exs[\nabla F_i(\vx^{(k)})]}^2 \\
    =& \Exs\vecnorm{- \lr_i g_i(\vx^{(k)}) + \lr_i\nabla F_i(\vx^{(k)}) + \vx^{(k)} - \lr_i\nabla F_i(\vx^{(k)}) - \Exs[\vx^{(k)}] + \lr_i\Exs[\nabla F_i(\vx^{(k)})]}^2 \\
    =& \lr_i^2\Exs\vecnorm{g_i(\vx^{(k)}) - \nabla F_i(\vx^{(k)})}^2 + \Exs\vecnorm{\vx^{(k)} - \lr_i\nabla F_i(\vx^{(k)}) - \Exs[\vx^{(k)}] + \lr_i\Exs[\nabla F_i(\vx^{(k)})]}^2 \\
    \leq& \lr_i^2 \sigma^2 + \Exs\vecnorm{\vx^{(k)} - \lr_i\nabla F_i(\vx^{(k)}) - \Exs[\vx^{(k)}] + \lr_i\Exs[\nabla F_i(\vx^{(k)})]}^2\\
    =& \lr_i^2 \sigma^2 + \Exs\vecnorm{\vx^{(k)} - \Exs[\vx^{(k)}]}^2 + \lr_i^2\Exs\vecnorm{\nabla F_i(\vx^{(k)}) - \Exs[\nabla F_i(\vx^{(k)})]}^2 \nonumber \\
        & -2\lr_i\inprod{\vx^{(k)} - \Exs[\vx^{(k)}]}{\nabla F_i(\vx^{(k)}) - \Exs[\nabla F_i(\vx^{(k)})]} \\
    \leq& (k+1)\lr_i^2\sigma^2+ \lr_i^2\Exs\vecnorm{\nabla F_i(\vx^{(k)}) - \Exs[\nabla F_i(\vx^{(k)})]}^2 \nonumber \\
        & -2\lr_i\Exs\inprod{\vx^{(k)} - \Exs[\vx^{(k)}]}{\nabla F_i(\vx^{(k)}) - \Exs[\nabla F_i(\vx^{(k)})]}. \label{eqn:lem2_step6}
\end{align}
Then, we define $\epsilon = \vx^{(k)} - \overline{\vx}^{(k)}$. Accordingly, we have
\begin{align}
    &\Exs\vecnorm{\nabla F_i(\vx^{(k)}) - \Exs[\nabla F_i(\vx^{(k)})]}^2 \nonumber\\
    =& \Exs_\epsilon\vecnorm{\nabla F_i(\overline{\vx}^{(k)}+\epsilon) - \Exs_\zeta[\nabla F_i(\overline{\vx}^{(k)}+\zeta)]}^2 \\
    =& \Exs_\epsilon\vecnorm{\Exs_\zeta[\nabla F_i(\overline{\vx}^{(k)}+\epsilon) - \nabla F_i(\overline{\vx}^{(k)}+\zeta)]}^2 \\
    \leq& L_i \Exs_\epsilon\inprod{\Exs_\zeta[\epsilon - \zeta]}{\Exs_\zeta[\nabla F_i(\overline{\vx}^{(k)}+\epsilon) - \nabla F_i(\overline{\vx}^{(k)}+\zeta)]} \label{eqn:lem2_step5}\\
    =& L_i \Exs_\epsilon\inprod{\epsilon}{\Exs_\zeta[\nabla F_i(\overline{\vx}^{(k)}+\epsilon) - \nabla F_i(\overline{\vx}^{(k)}+\zeta)]} \\
    =& L_i \Exs\inprod{\vx^{(k)} - \Exs[\vx^{(k)}]}{\nabla F_i(\vx^{(k)}) - \Exs[\nabla F_i(\vx^{(k)})]}
\end{align}
where \Cref{eqn:lem2_step5} is because of the second inequality \Cref{eqn:lem2_2} in Lemma 2. As a consequence, when $\lr_i L_i < 2$, we have
\begin{align}\label{eqn:lem2_step7}
    \lr_i \Exs\vecnorm{\nabla F_i(\vx^{(k)}) - \Exs[\nabla F_i(\vx^{(k)})]}^2 \leq 2\lr_i \Exs\inprod{\vx^{(k)} - \Exs[\vx^{(k)}]}{\nabla F_i(\vx^{(k)}) - \Exs[\nabla F_i(\vx^{(k)})]}.
\end{align}
Substituting \Cref{eqn:lem2_step7} back into \Cref{eqn:lem2_step6}, it follows that
\begin{align}
    \Exs\vecnorm{\vx^{(k+1)} - \Exs[\vx^{(k+1)}]}^2 \leq (k+1)\lr_i^2\sigma^2.
\end{align}
Here we complete the induction and prove that \Cref{assump:bnd_var} holds for SGD and $q_i(k) = k\lr_i^2 \sigma^2$.

\subsection{Empirical Validations for Adaptive Client Optimizers}\label{sec:empirical_assump}
While for vanilla SGD client optimizer, we can get the analytical expressions of $h_i,q_i$ in \Cref{assump:contractive_op,assump:bnd_var}, it can be complicated to perform the same analysis for adaptive client optimizers. So in this subsection, we are going to provide some empirical evidence that adaptive optimizers (such as \adam) also satisfies \Cref{assump:contractive_op} and can yield smaller $h_i$ values than vanilla SGD.

In particular, we evaluate the performance of vanilla SGD and \adam on the MNIST dataset~\cite{deng2012mnist}. For each optimizer, we train two logistic regression models, which start from two different initial points $\vx, \vy$ but traverse the same sequence of mini-batches of data. After repeating the same experiment multiple times with different random seeds, we report $h_i=\|\Exs[\opA_i(\vx;k)] - \Exs[\opA_i(\vy;k)]\|^2/\|\vx-\vy\|^2$ in \Cref{fig:val_assump1}. One can observe that, given a number of local steps $k$, \adam can have a smaller value of $h_i$ than vanilla SGD.
\begin{figure}[!ht]
    \centering
    \includegraphics[width=.5\textwidth]{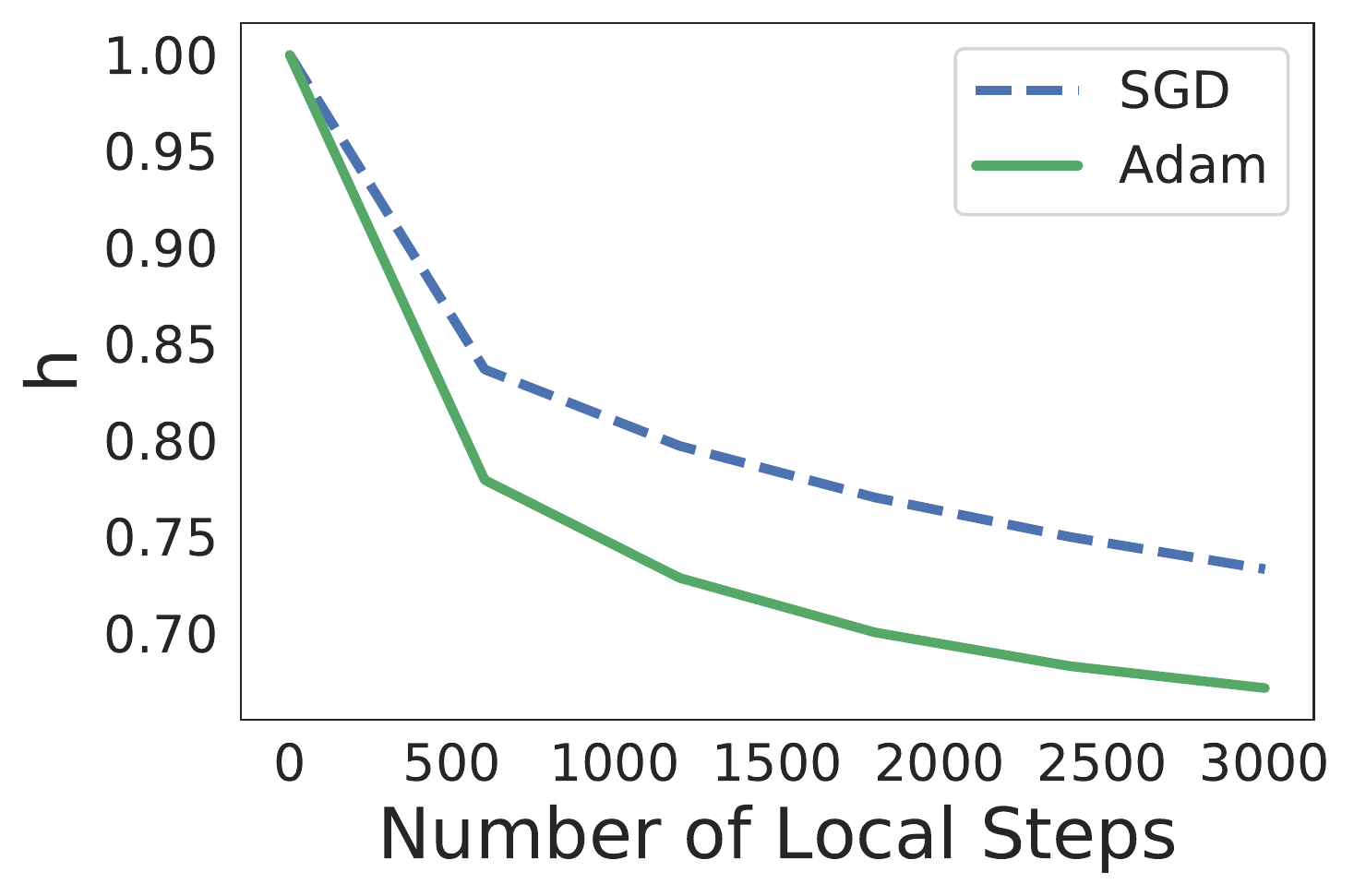}
    \caption{Empirical validation of \Cref{assump:contractive_op} for various client optimizers. The value of $h$ is evaluated by training logistic regression models on the MNIST dataset. For each optimizer, we select the best client learning rate from $\{0.5, 0.2, 0.02, 0.002\}$.}
    \label{fig:val_assump1}
\end{figure}

\section{Proof of Theorem 1}
We first prove $\Exs[\opA]$ is a contractive operator. Based on its definition, we have
\begin{align}
    \vecnorm{\Exs[\opA(\vx)] - \Exs[\opA(\vy)]}^2
    =& \vecnorm{\sum_{i=1}^M \clientWeight_i \Exs[ \opA_i(\vx) - \opA_i(\vy)]}^2 \\
    \leq&  \sum_{i=1}^M \clientWeight_i \vecnorm{\Exs[\opA_i(\vx) - \opA_i(\vy)]}^2 \label{eqn:thm1_step1}\\
    \leq& \sum_{i=1}^M \clientWeight_i h_i \vecnorm{\vx - \vy}^2 \label{eqn:thm1_step2}
\end{align}
where \Cref{eqn:thm1_step1} comes from the Jensen's inequality and \Cref{eqn:thm1_step2} is based on Assumption 1. Since $ 0 \leq \sum_{i=1}^M \clientWeight_i h_i < 1$, according to the Banach fixed-point theorem~\cite{ryu2016primer}, operator $\Exs[\opA]$ is contractive and has a unique fixed point, denoted by $\widetilde{\vx}$.

Then, according to the global update rule \Cref{eqn:mann_iter} of \fedopt, we have
\begin{align}
    \Exs\vecnorm{\vx^{(t+1)} - \widetilde{\vx}}^2
    =& \vecnorm{(1-\slr)(\vx^{(t)} - \widetilde{\vx}) + \slr (\Exs[\opA(\vx^{(t)})] - \widetilde{\vx})}^2 \nonumber \\
     & + \slr^2 \Exs\vecnorm{\sum_{i=1}^\numClients \clientWeight_i\opA_i(\vx^{(t)}) - \Exs[\sum_{i=1}^\numClients \clientWeight_i\opA(\vx^{(t)})]}^2. \label{eqn:thm1_step3}
\end{align}
The first term in \Cref{eqn:thm1_step3} can be bounded as follows
\begin{align}
    &\vecnorm{(1-\slr)(\vx^{(t)} - \widetilde{\vx}) + \slr (\Exs[\opA(\vx^{(t)})] - \widetilde{\vx})}^2 \nonumber \\
    \leq& (1-\slr)\vecnorm{\vx^{(t)} - \widetilde{\vx}}^2 + \slr \vecnorm{\Exs[\opA(\vx^{(t)})] - \widetilde{\vx}}^2 \label{eqn:thm1_step3.5}\\
    \leq& \brackets{1 - \slr (1-\sum_{i=1}^M \clientWeight_i h_i)}\vecnorm{\vx^{(t)} - \widetilde{\vx}}^2 \label{eqn:thm1_step4}
\end{align}
where \Cref{eqn:thm1_step3.5} comes from the fact that for any vectors $\bm{a}, \bm{b}$, we have $\vecnorm{\slr \bm{a} + (1-\slr)\bm{b}}^2 \leq \slr \vecnorm{\bm{a}}^2 + (1-\slr)\vecnorm{\bm{b}}^2$, and the last inequality is from the contraction property of $\Exs[\opA]$. For the second term in \Cref{eqn:thm1_step3}, we have
\begin{align}
    \Exs\vecnorm{\sum_{i=1}^\numClients \clientWeight_i\opA_i(\vx^{(t)}) - \Exs[\sum_{i=1}^\numClients \clientWeight_i\opA(\vx^{(t)})]}^2
    =& \sum_{i=1}^M \clientWeight_i^2 \Exs\vecnorm{\opA_i(\vx^{(t)}) - \Exs[\opA_i(\vx^{(t)})]}^2 \\
    \leq& \sigma^2 \sum_{i=1}^M \clientWeight_i^2 q_i  \label{eqn:thm1_step5}
\end{align}
Substituting \Cref{eqn:thm1_step4,eqn:thm1_step5} back into \Cref{eqn:thm1_step3} we have
\begin{align}
    \Exs\vecnorm{\vx^{(t+1)} - \widetilde{\vx}}^2
    \leq& \brackets{1 - \slr (1-\sum_{i=1}^M \clientWeight_i h_i)}\vecnorm{\vx^{(t)} - \widetilde{\vx}}^2 + \slr^2\sigma^2 \sum_{i=1}^M \clientWeight_i^2 q_i.
\end{align}
Taking the total expectation on both sides, one can get
\begin{align}\label{eqn:thm1_pre_final}
    \Exs\vecnorm{\vx^{(t+1)} - \widetilde{\vx}}^2
    \leq& \brackets{1 - \slr (1-\sum_{i=1}^M \clientWeight_i h_i)}\Exs\vecnorm{\vx^{(t)} - \widetilde{\vx}}^2 + \slr^2\sigma^2 \sum_{i=1}^M \clientWeight_i^2 q_i.
\end{align}

In order to get the final convergence rate, we need a technical lemma from \cite{stich2019unified}, stated as follows.
\begin{lem}[\citet{stich2019unified}]\label{lem:rate}
Suppose there are two non-negative sequences $\{r_t\}, \{s_t\}$ that satisfy the relation
\begin{align}
    r_{t+1} \leq (1-a \gamma_t)r_t - b\gamma_t s_t + c\gamma_t^2
\end{align}
for all $t\geq 0$ and for parameters $b>0, a,c\geq0$ and non-negative stepsizes $\{\gamma_t\}$ with $\gamma_t \leq 1/d$ for a parameter $d \geq a, d >0$. Then, there exists weights $w_t \geq 0, W_T := \sum_{t=0}^T w_t$, such that:
\begin{align}
    \frac{b}{W_T}\sum_{t=0}^T s_t w_t + a r_{T+1} \leq 32 d r_0 \parenth{1-\frac{a}{d}}^{\frac{T}{2}} + \frac{36c}{a T} \leq 32 d r_0 \exp\brackets{-\frac{aT}{2d}} + \frac{36c}{a T}
\end{align}
\end{lem}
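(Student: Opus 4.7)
The plan is a weight-and-stepsize tuning argument with a case split on the horizon $T$. First, rewrite the hypothesized recursion as $b\gamma_t s_t \leq (1-a\gamma_t)r_t - r_{t+1} + c\gamma_t^2$. Picking non-negative multipliers $u_t$ that satisfy the telescoping condition $u_t(1-a\gamma_t) \leq u_{t-1}$ and summing over $t = 0,\dots,T$ produces
\begin{align}
    b\sum_{t=0}^T u_t \gamma_t s_t + u_T r_{T+1} \leq u_{-1} r_0 + c\sum_{t=0}^T u_t \gamma_t^2.
\end{align}
Identifying the lemma's weights as $w_t := u_t \gamma_t$ and $W_T := \sum_t w_t$, division by $W_T$ yields an inequality with the LHS structure of the lemma as long as $u_T/W_T$ can be made at least of order $a$; the two RHS terms then become the bias contribution (coming from $r_0$) and the noise contribution (coming from $c$), which must be bounded separately.

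The next step is a case split on $T$. In the short-horizon regime $aT \leq 2d$, pick the constant stepsize $\gamma_t = 1/d$. Dropping the non-negative $-b\gamma_t s_t$ term and unrolling directly gives $r_{T+1} \leq (1-a/d)^{T+1}r_0 + c/(ad)$, so $a r_{T+1} \leq a r_0 + c/d$. Because $d \geq a$ and the decay factor $(1-a/d)^{T/2}$ is bounded below by a positive constant in this regime, the term $32 d r_0 (1-a/d)^{T/2}$ already dominates $a r_0$, and $c/d$ is at most $36c/(aT)$ by the defining inequality $aT \leq 2d$; hence the bound holds trivially. In the long-horizon regime $aT > 2d$, I would adopt a two-phase stepsize: a constant warm-up $\gamma_t = 1/d$ for the first $t_0 = O(d/a)$ iterations, which contracts the initial error by at least $(1-a/d)^{t_0} \leq (1-a/d)^{T/2}$ (this produces the bias term appearing in the lemma), followed by a decaying schedule $\gamma_t = \Theta(1/(a(t-t_0)))$ paired with polynomial multipliers $u_t$ chosen so that $u_t(1-a\gamma_t) \leq u_{t-1}$ stays satisfied. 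The standard computation for this phase gives $\sum w_t \gamma_t / W_T = O(1/(aT))$, which accounts for the $36c/(aT)$ noise term.

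The main obstacle is producing both LHS terms $\frac{b}{W_T}\sum w_t s_t$ and $a r_{T+1}$ simultaneously with the correct coefficients. A pure averaging proof loses the last-iterate term $a r_{T+1}$, while a pure last-iterate proof discards the weighted sum. To retain both, the multipliers $u_t$ must be chosen so that $u_T$ is comparable to $W_T/\gamma_T$ up to a factor of $a$, which is compatible with the telescoping constraint only for a narrow family of increasing weight schedules. Verifying $u_t(1-a\gamma_t) \leq u_{t-1}$ along the decaying portion of $\gamma_t$ and then gluing the two regimes together without incurring extra logarithmic factors is the delicate bookkeeping step, and it is also where the explicit numerical constants $32$ and $36$ get pinned down. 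Once the first bound is in hand, the secondary inequality $(1-a/d)^{T/2} \leq \exp(-aT/(2d))$ in the lemma statement is immediate from the pointwise estimate $1-x \leq e^{-x}$ for $x \in [0,1]$.
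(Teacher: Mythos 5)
The paper does not actually prove this lemma: it is imported verbatim from \citet{stich2019unified} and used as a black box in the proof of Theorem 1, so there is no in-paper argument to compare against. Measured against the source proof, your plan is essentially the right one --- the telescoping with multipliers $u_t$ satisfying $u_t(1-a\gamma_t)\le u_{t-1}$, the identification $w_t=u_t\gamma_t$, the case split on the horizon, and the two-phase schedule (constant $1/d$ warm-up followed by $\gamma_t=\Theta(1/(a(t-t_0)))$ with polynomially growing weights) is exactly the mechanism used there to obtain the $32\,d\,r_0(1-a/d)^{T/2}+36c/(aT)$ bound, and your closing remark that the second inequality follows from $1-x\le e^{-x}$ is correct.

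Two genuine gaps remain. First, in the short-horizon case ($aT\le 2d$) you drop the $-b\gamma_t s_t$ term and only control $a\,r_{T+1}$; but the left-hand side of the lemma also contains $\frac{b}{W_T}\sum_t s_t w_t$ with $b>0$, and you never exhibit weights or bound that sum in this regime. The fix is to not drop the term: apply your own telescoping identity with $u_t=(1-a/d)^{-(t+1)}$ and $\gamma_t=1/d$, which controls both terms simultaneously; as written, the case is incomplete. Second, your claim that ``the decay factor $(1-a/d)^{T/2}$ is bounded below by a positive constant in this regime'' fails at the admissible endpoint $a=d$, where the factor is identically zero and the entire bound must come from the $36c/(aT)$ term; this is an edge case, but your argument as stated breaks there. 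Finally, you explicitly defer the verification of $u_t(1-a\gamma_t)\le u_{t-1}$ along the decaying phase and the derivation of the constants $32$ and $36$; since the lemma is stated with those exact constants, that bookkeeping is not optional if the goal is to prove the statement rather than an order-wise analogue.
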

By setting $r_t = \Exs\vecnorm{\vx^{(t)} - \widetilde{\vx}}^2, s_t = 0, a = 1-\sum_{i=1}^M \clientWeight_i h_i, c = \sigma^2\sum_{i=1}^M \clientWeight_i^2 q_i, d = 1$, we can obtain from \Cref{lem:rate}:
\begin{align}
    \Exs\vecnorm{\vx^{(T+1)} - \widetilde{\vx}}^2
    \leq& \frac{32 \vecnorm{\vx^{(0)} - \widetilde{\vx}}^2}{1-\sum_{i=1}^M \clientWeight_i h_i} \brackets{\sum_{i=1}^M \clientWeight_i h_i}^{\frac{T}{2}} + \frac{36\sigma^2 \sum_{i=1}^M \clientWeight_i^2 q_i}{T (1-\sum_{i=1}^M \clientWeight_i h_i)^2} \\
    \leq& c\cdot \parenth{\brackets{\sum_{i=1}^M \clientWeight_i h_i}^{\frac{T}{2}}\frac{\vecnorm{\vx^{(0)} - \widetilde{\vx}}^2}{1-\sum_{i=1}^M \clientWeight_i h_i}  + \frac{1}{T}\frac{\sigma^2 \sum_{i=1}^M \clientWeight_i^2 q_i}{(1-\sum_{i=1}^M \clientWeight_i h_i)^2}} \label{eqn:thm1_final}
\end{align}
where $c$ is a positive constant. Here we complete the proof of \Cref{thm:conv_fedopt}.

\paragraph{Special Case: Client Optimizer is GD.}
When the client optimizer is GD (\ie $\sigma = 0$), all clients have the same $\lr, \localStep, \mu, L$, and $\clientWeight_i = 1/M$, we have $h_i = (1-\lr\mu)^{2\tau}$. Then, we can directly set $\slr =1$ in \Cref{eqn:thm1_pre_final}, we get
\begin{align}\label{eqn:thm1_gd}
    \Exs\vecnorm{\vx^{(T)} - \widetilde{\vx}}^2
    \leq& (1-\lr\mu)^{2\tau T}\vecnorm{\vx^{(0)} - \widetilde{\vx}}^2.
\end{align}
If all local operator $\opA_i$ have the same fixed point $\vx_i^*=\vx^*$, then we have $\widetilde{\vx} = \vx^*$ and \Cref{eqn:thm1_gd} recovers the convergence rate of local GD in the IID data setting.

\paragraph{Special Case: Client Optimizer is SGD.}
When the client optimizer is SGD, all clients have the same $\lr, \localStep, \mu, L$, and $\clientWeight_i = 1/M$, we have $h_i = (1-\lr\mu)^{2\tau}$ and $q_i = \localStep\lr^2\sigma^2$. Substituting these into \Cref{eqn:thm1_pre_final}, we get
\begin{align}
     \Exs\vecnorm{\vx^{(T)} - \widetilde{\vx}}^2
    \leq& [1-\slr(1-(1-\lr\mu)^{2\tau})] \cdot \Exs\vecnorm{\vx^{(t)} - \widetilde{\vx}}^2 + \frac{\slr^2 \lr^2 \sigma^2\tau}{M}.
\end{align}
Now we are going to prove that for any $T \geq 0$, with $\slr = 2/[(1-(1-\lr\mu)^{2\tau})(t+\beta)]$,
\begin{align}
    \Exs\vecnorm{\vx^{(T)} - \widetilde{\vx}}^2
    \leq& \frac{4\sigma^2\lr^2 \tau}{M (1-(1-\lr\mu)^{2\tau})^2(T + \beta)} \label{eqn:thm1_step6}
\end{align}
where $\beta$ is a constant, that satisfies that $0 < \beta \leq 4\sigma^2\lr^2\tau/[(1-(1-\lr\mu)^{2\tau})\vecnorm{\vx^{(0)}-\widetilde{\vx}}]^2$. When $t=0$, the inequality \Cref{eqn:thm1_step6} automatically holds according to the definition of $\beta$. Then, we assume \Cref{eqn:thm1_step6} for some $t>1$ and examine the situation $t+1$.
\begin{align}
    \Exs\vecnorm{\vx^{(t+1)} - \widetilde{\vx}}^2
    \leq& \brackets{1 - \frac{2}{t+\beta}}\frac{4\sigma^2\lr^2 \tau}{M(1-(1-\lr\mu)^{2\tau})^2(t + \beta)} + \frac{4\sigma^2\lr^2\tau}{M(1-(1-\lr\mu)^{2\tau})^2(t+\beta)^2} \\
    =& \frac{t+\beta - 1}{t+\beta}\frac{4\sigma^2\lr^2\tau}{M(1-(1-\lr\mu)^{2\tau})^2(t+\beta)} \\
    \leq& \frac{4\sigma^2\lr^2\tau}{M(1-(1-\lr\mu)^{2\tau})^2(t+1+\beta)}.
\end{align}
So \Cref{eqn:thm1_step6} also holds for $t+1$. We complete the induction procedure and conclude that 
\begin{align}
    \Exs\vecnorm{\vx^{(T)} - \widetilde{\vx}}^2
    \leq& \frac{4\sigma^2\lr^2 \tau}{M (1-(1-\lr\mu)^{2\tau})^2(T + \beta)} \\
    =& \frac{\sigma^2}{ \mu^2 M \tau(T+\beta)} \parenth{\frac{2\lr\mu\tau}{1-(1-\lr\mu)^{2\tau}}}^2. \label{eqn:thm1_sgd_final}
\end{align}
When $\tau =1$, we have $\widetilde{\vx} = \vx^*$ and
\begin{align}
    \Exs\vecnorm{\vx^{(T)} - \widetilde{\vx}}^2
    \leq& \frac{\sigma^2}{ \mu^2 M (T+\beta)} \parenth{\frac{2\lr\mu}{1-(1-\lr\mu)^{2}}}^2 \\
    =& \frac{\sigma^2}{ \mu^2 M (T+\beta)} \parenth{\frac{2\lr\mu}{\lr\mu (2-\lr\mu)}}^2 \\
    =& \frac{\sigma^2}{ \mu^2 M (T+\beta)} \parenth{\frac{2}{2-\lr\mu}}^2 \\
    \leq& \frac{4\sigma^2}{ \mu^2 M (T+\beta)} \label{eqn:thm1_sgd_tau1}
\end{align}
where the last inequality follows from $\lr\mu \leq 1$. The result \Cref{eqn:thm1_sgd_tau1} recovers the optimal rate for distributed synchronous SGD~\cite{bottou2016optimization}. When $\tau > 1$, one can obtain that
\begin{align}
    \Exs\vecnorm{\vx^{(T)} - \widetilde{\vx}}^2
    \leq& \frac{\sigma^2}{ \mu^2 M \tau(T+\beta)} \brackets{z(\lr\mu)}^2 \label{eqn:thm1_sgd_final2}
\end{align}
where $z(x) = 2x\tau/(1-(1-x)^{2\tau})$ for $x > 0$. When $\lr\mu \rightarrow 0$, we have $z(\lr\mu) \simeq 1$. In other cases, we are going to prove that if $\lr\mu\tau$ is upper bounded, then $z(\lr\mu)$ can also be upper bounded by some constant. In particular, we first need to prove $z(x)$ is monotonically increasing with $x$ by checking the derivative of $z(x)$:
\begin{align}
    z'(x) 
    &= \frac{2\tau[1-(1-x)^{2\tau}] - 4x\tau^2(1-x)^{2\tau-1}}{[1-(1-x)^{2\tau}]^2} \\
    &= \frac{2\tau}{[1-(1-x)^{2\tau}]^2} \brackets{1 - (1-x)^{2\tau} - 2x\tau(1-x)^{2\tau-1}} \\
    &= \frac{2\tau}{[1-(1-x)^{2\tau}]^2} \brackets{1 - [1+ (2\tau-1)x](1-x)^{2\tau-1}} \\
    &\geq \frac{2\tau}{[1-(1-x)^{2\tau}]^2} \brackets{1 - (1+x)^{(2\tau-1)}(1-x)^{2\tau-1}} \\
    &= \frac{2\tau}{[1-(1-x)^{2\tau}]^2} \brackets{1 - (1-x^2)^{2\tau-1}} > 0.
\end{align}
Suppose $\lr\mu\tau \leq 1$, it follows that
\begin{align}
    \max z(\lr\mu) = z(1/\tau) = \frac{2}{1 - (1-\frac{1}{\tau})^{2\tau}} \leq \frac{2}{1-e^{-2}} < 3. \label{eqn:thm1_zx}
\end{align}
Substituting \Cref{eqn:thm1_zx} into \Cref{eqn:thm1_sgd_final2}, we have
\begin{align}
    \Exs\vecnorm{\vx^{(T)} - \widetilde{\vx}}^2
    \leq& \frac{9\sigma^2}{ \mu^2 M \tau(T+\beta)}
\end{align}
which matches the lower bound of local SGD in the IID data setting~\cite{woodworth2020minibatch}, in which all local operator share the same fixed point $\widetilde{\vx}=\vx^* = \vx_i^*$ for all $i$.

\section{Connection with Previous Works on the Minimizer Inconsistency}\label{sec:comparison_recent_works}
When client learning rates, number of local steps are the same across all clients, and non-adaptive, deterministic \textsc{ClientOpt} are used, $\| \widetilde{\vx} -\vx^* \|$ can vanish to zero along with the learning rates. This phenomenon has been observed and analyzed by few recent literature in different forms, see \citep{charles2021convergence,malinovskiy2020local,pathak2020fedsplit}. \Cref{thm:conv_fedopt} generalizes these results by allowing heterogeneous local hyper-parameters and adaptive, stochastic client optimizers. In addition, the non-vanishing bias was studied in \cite{wang2020tackling} by assuming different local learning rates and local steps at clients. In this paper, we further generalize the results by showing that even when the learning rates and local steps are the same, using local adaptive methods will lead to a non-vanishing gap.  We summarize the differences in \Cref{tab:comparison_recent_works}.
\begin{table}[!ht]
\small
\centering
\begin{tabular}{c | c c c c}    \toprule
Papers & Different $\lr,\localStep$ & Stochastic \textsc{ClientOpt} & Adaptive \textsc{ClientOpt} & Non-vanishing bias \\\midrule
\citep{charles2021convergence} & \xmark & \xmark & \xmark & \xmark \\
\citep{malinovsky2020local} & \xmark & \xmark & \xmark & \xmark   \\
\citep{pathak2020fedsplit} & \xmark & \xmark & \xmark & \xmark   \\ 
\citep{wang2020tackling} & \cmark & \cmark & \xmark & \cmark  \\ 
\rowcolor{gray!20} This paper & \cmark & \cmark & \cmark & \cmark \\
\bottomrule
\end{tabular}
\caption{Comparison with previous works that studied minimizer inconsistency in different forms.}
\label{tab:comparison_recent_works}
\end{table}

\section{Proof of \Cref{thm:qm}}
In the quadratic problem, we can write down the analytical expression of operator $\opA_i$. Specifically, for the $K$-th local iterate of client $i$, we have
\begin{align}
    \vx^{(k+1)} 
    =& \vx^{(k)} - \lr_i \matP_i \nabla F_i(\vx^{(k)}) \\
    =& \vx^{(k)} - \lr_i \matP_i \matH_i(\vx^{(k)} - \vx_i^*) \\
    =& (\matI - \lr_i\matP_i\matH_i)(\vx^{(k)} - \vx_i^*) + \vx_i^*.
\end{align}
That is,
\begin{align}
    \vx^{(k+1)} - \vx_i^* = (\matI - \lr_i\matP_i\matH_i)^{k+1}(\vx - \vx_i^*).
\end{align}
According to the definition of $\opA_i$, we have
\begin{align}
    \opA_i(\vx;\tau_i) 
    =& (\matI - \lr_i\matP_i\matH_i)^{\tau_i}(\vx - \vx_i^*) + \vx_i^*, \\
    \opA(\vx) 
    =& \sum_{i=1}^M \clientWeight_i [(\matI - \lr_i\matP_i\matH_i)^{\tau_i}(\vx - \vx_i^*) + \vx_i^*].
\end{align}
We first show that $\opA$ is contractive. Note that
\begin{align}
    \opA(\vx) - \opA(\vy)
    =& \brackets{\sum_{i=1}^M \clientWeight_i (\matI - \lr_i\matP_i\matH_i)^{\tau_i}} (\vx - \vy).
\end{align}
Therefore, as long as the operator norm of $\sum_{i=1}^M \clientWeight_i (\matI - \lr_i\matP_i\matH_i)^{\tau_i}$ is smaller than $1$, the operator $\opA$ is contractive and has a unique fixed point $\widetilde{\vx}$. Next, we are going to find the analytical expression of $\widetilde{\vx}$. We have
\begin{align}
    \widetilde{\vx} - \opA(\widetilde{\vx})
    =& \sum_{i=1}^M \clientWeight_i [\matI - (\matI - \lr_i\matP_i\matH_i)^{\tau_i} ](\widetilde{\vx} - \vx_i^*) = 0.
\end{align}
After minor rearranging, it follows that
\begin{align}
    \widetilde{\vx} = \brackets{\sum_{i=1}^M \clientWeight_i [\matI - (\matI - \lr_i\matP_i\matH_i)^{\tau_i} ]}^{-1}\brackets{\sum_{i=1}^M \clientWeight_i [\matI - (\matI - \lr_i\matP_i\matH_i)^{\tau_i}] \vx_i^*}.
\end{align}
When $\lr_i = \gamma_i \lr$ and $\lr$ approaches to zero, we have $\matI - (\matI - \lr_i\matP_i\matH_i)^{\tau_i} \simeq \lr_i\tau_i\matP_i\matH_i$ and 
\begin{align}
    \lim_{\lr\rightarrow 0}\widetilde{\vx}
    = \brackets{\sum_{i=1}^M \clientWeight_i \gamma_i\tau_i\matP_i\matH_i}^{-1}\brackets{\sum_{i=1}^M \clientWeight_i \gamma_i\tau_i\matP_i\matH_i \vx_i^*}.
\end{align}
Here we complete the proof.

\section{Proof for the Convergence of Local Correction}\label{sec:conv_local_cor}
\subsection{Main Results}
Without loss of generalities, suppose that at the $t$-th round, the local model changes of client $i$ can be written as
\begin{align}
    \vx^{(t)} - \opA_i(\vx^{(t)};\tau_i) = \lr_i\sum_{k=0}^{\tau_i-1} \matB_i^{(t,k)}\nabla \obj_i(\vx_i^{(t,k)})
\end{align}
where $\lr_i$ is the client learning rate, $\{\matB_i^{(t,k)}\}$ are symmetric and positive definite matrices, and $\vx_i^{(t,k)}$ denotes the local iterate after performing $k$ local steps. When the local correction technique is applied, the client will send the following normalized local changes to the server:
\begin{align}
    \bm{h}_i^{(t)} = \frac{1}{\sum_{k=0}^{\tau_i-1} \matB_i^{(t,k)}}\sum_{k=0}^{\tau_i-1} \matB_i^{(t,k)}\nabla \obj_i(\vx_i^{(t,k)}) := \sum_{k=0}^{\tau_i-1} \matA_i^{(t,k)}\nabla \obj_i(\vx_i^{(t,k)})
\end{align}
where $\matA_i^{(t,k)} = \matB_i^{(t,k)}/\sum_{k=0}^{\tau_i-1} \matB_i^{(t,k)}$ and $\sum_{k=0}^{\tau_i-1} \matA_i^{(t,k)} = \matI$. Then, the server will aggregate the normalized local changes and update the global model as follows
\begin{align}
    \vx^{(t+1)} = \vx^{(t)} - \slr \sum_{i=1}^M \clientWeight_i \bm{h}_i^{(t)} \label{eqn:global_iter_local_cor}
\end{align}
where $\slr$ denotes the server learning rate.

Our convergence analysis will be centered around the following assumptions.
\begin{assump}\label{assump:smooth}
Each local objective is Lipschitz smooth, that is, $\vecnorm{\nabla \obj_i(\vx) - \nabla \obj_i(\bm{y})} \leq L \vecnorm{\vx - \bm{y}}, \forall i \in [M]$.
\end{assump}
\begin{assump}\label{assump:bnd_A}
The matrices $\{\matA_i^{(t,k)}\}$ are positive-definite symmetric matrices and have bounded operator norm: $\opnorm{\matA_i^{(t,k)}}\leq \Lambda/\tau_i$.
\end{assump}
\begin{assump}\label{assump:bnd_grads}
The pre-conditioned gradients at each local iteration have bounded norm, \ie $\vecnorm{\matB_i^{(t,k)}\nabla \obj_i(\vx_i^{(t,k)})}\leq G$.
\end{assump}

\begin{thm}[Convergence Guarantee for Local Correction Technique]\label{thm:convergence}
Suppose all clients have the same client learning rate $\lr$ and the same number of local sptes $\tau$. Under \Cref{assump:smooth,assump:bnd_A,assump:bnd_grads}, if the server learning rate is set as $\slr = \lr\tau \leq 1/L$ and the client learning rate is 
\begin{align}
    \lr = \min \braces{\frac{1}{\tau L}, \frac{1}{\tau T^{\frac{1}{3}}}\parenth{\frac{D}{L^2 \Lambda G^2}}^{\frac{1}{3}}}
\end{align}
where $D = F(\vx^{(0)}) - F_\text{inf}$, then the global iterate \Cref{eqn:global_iter_local_cor} converges at the following rate:
\begin{align}
    \min_{t \in [0, T]} \vecnorm{\nabla \obj(\vx^{(t)})}^2 
    = \mathcal{O}\parenth{\frac{1}{ T} + \frac{\Lambda^{\frac{1}{3}}G^{\frac{2}{3}}}{T^{\frac{2}{3}}}}. \label{eqn:thm3}
\end{align}
\end{thm}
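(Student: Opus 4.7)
\textbf{Proof proposal for \Cref{thm:convergence}.} The plan is to run the standard descent-lemma argument for non-convex smooth optimization on the global iterates, but to control the gap between the aggregated ``pseudo-gradient'' and the true global gradient $\nabla F(\vx^{(t)})$ by exploiting the crucial identity $\sum_{k=0}^{\tau-1}\matA_i^{(t,k)}=\matI$ built into the local correction. Writing $\bar{\bm{h}}^{(t)}=\sum_i w_i\bm{h}_i^{(t)}$ and $\epsilon^{(t)}=\bar{\bm{h}}^{(t)}-\nabla F(\vx^{(t)})$, the update rule \Cref{eqn:global_iter_local_cor} becomes $\vx^{(t+1)}=\vx^{(t)}-\alpha(\nabla F(\vx^{(t)})+\epsilon^{(t)})$ with $\alpha=\eta\tau$, which is the form to which I want to apply $L$-smoothness of $F$.

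First I would establish the one-step descent inequality. Applying \Cref{assump:smooth} (smoothness of $F$, inherited from smoothness of each $F_i$ since $\|\nabla F(\vx)-\nabla F(\vy)\|\leq\sum_i w_i L\|\vx-\vy\|=L\|\vx-\vy\|$), expanding $\langle\nabla F(\vx^{(t)}),\bar{\bm{h}}^{(t)}\rangle=\|\nabla F(\vx^{(t)})\|^2+\langle\nabla F(\vx^{(t)}),\epsilon^{(t)}\rangle$, and using Young's inequality on the cross term together with $\|\bar{\bm{h}}^{(t)}\|^2\leq 2\|\nabla F(\vx^{(t)})\|^2+2\|\epsilon^{(t)}\|^2$, I obtain
\begin{equation*}
F(\vx^{(t+1)})\leq F(\vx^{(t)})-\alpha\Big(\tfrac12-L\alpha\Big)\|\nabla F(\vx^{(t)})\|^2+\alpha\Big(\tfrac12+L\alpha\Big)\|\epsilon^{(t)}\|^2.
\end{equation*}
Under $\alpha\leq 1/L$ (with a sufficiently small constant; if needed I will strengthen to $\alpha\leq 1/(4L)$), the first coefficient is at least $\alpha/4$ and the second is at most $3\alpha/2$.

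The heart of the proof is bounding $\|\epsilon^{(t)}\|^2$, and this is where I expect the main technical effort. Because $\sum_k\matA_i^{(t,k)}=\matI$, I can write
\begin{equation*}
\bm{h}_i^{(t)}-\nabla F_i(\vx^{(t)})=\sum_{k=0}^{\tau-1}\matA_i^{(t,k)}\bigl(\nabla F_i(\vx_i^{(t,k)})-\nabla F_i(\vx^{(t)})\bigr),
\end{equation*}
so only the local client drift contributes. The drift itself is controlled by \Cref{assump:bnd_grads}: $\|\vx_i^{(t,k)}-\vx^{(t)}\|\leq\eta\sum_{j<k}\|\matB_i^{(t,j)}\nabla F_i(\vx_i^{(t,j)})\|\leq\eta k G$. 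Combining with $L$-smoothness and the operator-norm bound $\|\matA_i^{(t,k)}\|\leq\Lambda/\tau$ from \Cref{assump:bnd_A}, a Cauchy--Schwarz/Jensen step yields $\|\bm{h}_i^{(t)}-\nabla F_i(\vx^{(t)})\|^2=\mathcal{O}(\Lambda^2 L^2\alpha^2 G^2)$, and hence $\|\epsilon^{(t)}\|^2\leq\sum_i w_i\|\bm{h}_i^{(t)}-\nabla F_i(\vx^{(t)})\|^2=\mathcal{O}(\Lambda^{?}L^2\alpha^2 G^2)$ where I will carefully track whether one or both factors of $\Lambda$ survive (the stated rate suggests a single factor of $\Lambda$, which can be recovered by invoking the matrix Jensen inequality rather than a naive triangle bound).

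Finally I would telescope the descent inequality from $t=0$ to $T-1$, divide by $\alpha T/4$, and drop the minimum over $t$ on the left to obtain
\begin{equation*}
\min_{t\in[0,T]}\|\nabla F(\vx^{(t)})\|^2=\mathcal{O}\!\left(\frac{D}{\alpha T}+\Lambda L^2\alpha^2 G^2\right),
\end{equation*}
where $D=F(\vx^{(0)})-F_{\inf}$. Optimizing the two terms against each other gives $\alpha^\star\asymp(D/(\Lambda L^2 G^2 T))^{1/3}$; clipping by $1/L$ produces exactly the step size prescribed in the theorem, and substituting yields the advertised rate $\mathcal{O}(1/T+\Lambda^{1/3}G^{2/3}/T^{2/3})$ (with $L$ and $D$ absorbed into the constants). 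The main obstacle, in my view, is the drift estimate: getting the right power of $\Lambda$ and a non-loose factor of $\tau$, because the matrices $\matA_i^{(t,k)}$ are only bounded in operator norm and do not commute, so the usual scalar-weight Jensen argument has to be replaced by a careful Cauchy--Schwarz split using $\sum_k\|\matA_i^{(t,k)}\|\leq\Lambda$.
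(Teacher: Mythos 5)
Your proposal follows essentially the same route as the paper's proof: the descent lemma on the global iterate, the decomposition of the pseudo-gradient error via $\sum_{k}\matA_i^{(t,k)}=\matI$, the drift bound $\|\vx_i^{(t,k)}-\vx^{(t)}\|\le \lr k G$ from \Cref{assump:bnd_grads}, and the matrix Jensen inequality (the paper's \Cref{lem:mat_Jensen}) to obtain a single factor of $\Lambda$ — which is exactly the subtlety you correctly flag — followed by telescoping and optimizing $\lr\tau$. The only cosmetic difference is that you handle the cross term with Young's inequality where the paper uses the polarization identity $\inprod{a}{b}=\tfrac12(\|a\|^2+\|b\|^2-\|a-b\|^2)$; both yield the same bound up to constants.
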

It is worth noting that the convergence rate \Cref{eqn:thm3} matches previous results in \cite{koloskova2020unified,khaled2020tighter} in the deterministic, non-IID data setting. \Cref{thm:convergence} shows that using adaptive client optimizers together with local correction can preserve the same convergence rate as vanilla GD client optimizer and there is no non-vanishing solution bias.
\subsection{Technical Lemmas}
\begin{lem}\label{lem:mat_Jensen}
Suppose $\matA_k \in \mathbb{R}^{d \times d}, k \in [1, K]$ are symmetric positive definite matrices.
\begin{align}
    \vecnorm{\sum_{k=1}^K \matA_k \bm{b}_k}^2 \leq \opnorm{\matA_s}^2 \opnorm{\matA_s^{-1}}\sum_{k=1}^K \opnorm{\matA_k}\vecnorm{\bm{b}_k}^2 \label{eqn:lem_sum_ab}
\end{align}
where $\matA_s = \sum_{k=1}^K \matA_k$.
\end{lem}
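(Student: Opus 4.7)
The plan is to reduce the claim to the scalar Cauchy--Schwarz/Jensen inequality after extracting $\matA_s$ as a left multiplier, so that what remains is a sum with positive-definite weights that sum to the identity. Concretely, first rewrite
\begin{equation*}
\sum_{k=1}^K \matA_k \bm{b}_k \;=\; \matA_s\sum_{k=1}^K \matA_s^{-1}\matA_k\,\bm{b}_k
\;=\; \matA_s^{1/2}\sum_{k=1}^K \widetilde{\matA}_k\,\matA_s^{1/2}\bm{b}_k,
\end{equation*}
where $\widetilde{\matA}_k := \matA_s^{-1/2}\matA_k\matA_s^{-1/2}$. The key observation is that each $\widetilde{\matA}_k$ is symmetric PSD (since $\matA_k$ is) and the family satisfies $\sum_k \widetilde{\matA}_k = \matA_s^{-1/2}\matA_s\matA_s^{-1/2} = \matI$. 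Taking squared norms and pulling out the outer $\matA_s^{1/2}$ gives
\begin{equation*}
\vecnorm{\sum_{k=1}^K \matA_k\bm{b}_k}^2 \;\leq\; \opnorm{\matA_s}\,\vecnorm{\sum_{k=1}^K \widetilde{\matA}_k\,\bm{c}_k}^2, \qquad \bm{c}_k := \matA_s^{1/2}\bm{b}_k.
\end{equation*}

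Next, I would establish the following Jensen-type inequality as an intermediate step: for any symmetric PSD matrices $\matP_k$ with $\sum_k \matP_k = \matI$ and any vectors $\bm{c}_k$,
\begin{equation*}
\vecnorm{\sum_{k=1}^K \matP_k\bm{c}_k}^2 \;\leq\; \sum_{k=1}^K \bm{c}_k^\top \matP_k \bm{c}_k.
\end{equation*}
The proof is a two-step Cauchy--Schwarz: setting $\bm{v} = \sum_k \matP_k \bm{c}_k$, use Cauchy--Schwarz in the semi-inner product $\langle \bm{x},\bm{y}\rangle_{\matP_k} = \bm{x}^\top \matP_k \bm{y}$ to bound $\bm{v}^\top \matP_k \bm{c}_k$, then apply standard Cauchy--Schwarz on the outer sum over $k$, and exploit $\sum_k \matP_k = \matI$ to get $\sum_k \bm{v}^\top \matP_k \bm{v} = \vecnorm{\bm{v}}^2$. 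Dividing one factor of $\vecnorm{\bm{v}}$ yields the displayed inequality.

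Applying this with $\matP_k = \widetilde{\matA}_k$ and $\bm{c}_k = \matA_s^{1/2}\bm{b}_k$ and bounding $\bm{c}_k^\top \widetilde{\matA}_k \bm{c}_k \leq \opnorm{\widetilde{\matA}_k}\,\vecnorm{\bm{c}_k}^2$ gives
\begin{equation*}
\vecnorm{\sum_k \widetilde{\matA}_k\bm{c}_k}^2 \;\leq\; \sum_k \opnorm{\widetilde{\matA}_k}\,\vecnorm{\matA_s^{1/2}\bm{b}_k}^2 \;\leq\; \opnorm{\matA_s}\sum_k \opnorm{\widetilde{\matA}_k}\,\vecnorm{\bm{b}_k}^2.
\end{equation*}
Finally, use sub-multiplicativity $\opnorm{\widetilde{\matA}_k}=\opnorm{\matA_s^{-1/2}\matA_k\matA_s^{-1/2}}\leq \opnorm{\matA_s^{-1}}\opnorm{\matA_k}$ and combine with the outer factor $\opnorm{\matA_s}$ from the first display to obtain the stated $\opnorm{\matA_s}^2\opnorm{\matA_s^{-1}}\sum_k \opnorm{\matA_k}\vecnorm{\bm{b}_k}^2$.

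The only nontrivial step is the Jensen-type inequality for matrix-valued PSD weights, which is where the two applications of Cauchy--Schwarz must be combined carefully; everything else is algebraic manipulation and elementary operator-norm bounds. The rest is essentially accounting for how the similarity transformation by $\matA_s^{\pm 1/2}$ distributes the $\opnorm{\matA_s}$ and $\opnorm{\matA_s^{-1}}$ factors across the bound.
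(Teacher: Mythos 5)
Your proof is correct, and while it shares the same skeleton as the paper's (factor out $\matA_s$, normalize the weights so they sum to the identity, apply a Jensen-type inequality, then convert to operator norms), the execution differs in a way that matters. The paper normalizes with $\widetilde{\matA}_k = \matA_s^{-1}\matA_k$ and proves the key step $\vecnorm{\sum_k \widetilde{\matA}_k \bm{b}_k}^2 \leq \sum_k \bm{b}_k\tp \widetilde{\matA}_k \bm{b}_k$ via a bias--variance trace decomposition around $\overline{\bm{b}}$; but $\matA_s^{-1}\matA_k$ is not symmetric in general, and the paper's argument implicitly uses symmetry both to kill one cross term ($\widetilde{\matA}_k = \widetilde{\matA}_k\tp$) and to assert $\trace(\widetilde{\matA}_k \bm{v}_k\bm{v}_k\tp) = \bm{v}_k\tp\matA_s^{-1}\matA_k\bm{v}_k \geq 0$, which can fail for a product of two non-commuting PSD matrices. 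Your symmetrized conjugation $\widetilde{\matA}_k = \matA_s^{-1/2}\matA_k\matA_s^{-1/2}$ makes the normalized weights genuinely symmetric PSD summing to $\matI$, and your double Cauchy--Schwarz proof of the Jensen step is then airtight; the two factors of $\opnorm{\matA_s}$ simply get collected differently (one from the outer $\matA_s^{1/2}$, one from $\vecnorm{\matA_s^{1/2}\bm{b}_k}^2$) but land on the identical final bound. In short, your route is a cleaner and more rigorous version of the same idea, and it patches a real soft spot in the paper's own argument.
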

\begin{proof}
We define $\widetilde{\matA}_k = (\sum_{k=1}^K \matA_k)^{-1} \matA_k = \matA_s^{-1}\matA_k$. It directly follws that $\sum_{k=1}^K \widetilde{\matA}_k = \matI$. For the left hand side of \Cref{eqn:lem_sum_ab}, we have
\begin{align}
    \vecnorm{\sum_{k=1}^K \matA_k \bm{b}_k}^2
    =& \vecnorm{\matA_s \sum_{k=1}^K \widetilde{\matA}_k \bm{b}_k}^2 = \vecnorm{\matA_s \overline{\bm{b}}}^2 \leq \opnorm{\matA_s}^2 \vecnorm{\overline{\bm{b}}}^2 \label{eqn:lem_lhs}
\end{align}
where $\overline{\bm{b}}=\sum_{k=1}^K \widetilde{\matA}_k \bm{b}_k$. On the other hand, let $\bm{v}_k = \bm{b}_k - \overline{\bm{b}}$ and note that
\begin{align}
    \trace\parenth{\sum_{k=1}^K \widetilde{\matA}_k \bm{b}_k \bm{b}_k\tp}
    =& \trace\parenth{\sum_{k=1}^K \widetilde{\matA}_k (\bm{b}_k - \overline{\bm{b}} + \overline{\bm{b}}) (\bm{b}_k - \overline{\bm{b}} + \overline{\bm{b}})\tp} \\
    =& \trace\parenth{\sum_{k=1}^K \widetilde{\matA}_k \parenth{\bm{v}_k\bm{v}_k\tp + \bm{v}_k \overline{\bm{b}}\tp + \overline{\bm{b}} \bm{v}_k \tp + \overline{\bm{b}} \ \overline{\bm{b}}\tp}} \\
    =& \underbrace{\trace\parenth{\sum_{k=1}^K \widetilde{\matA}_k\bm{v}_k\bm{v}_k\tp}}_{\geq 0} + \underbrace{\trace\parenth{\sum_{k=1}^K \widetilde{\matA}_k\bm{v}_k \overline{\bm{b}}\tp}}_{=0} + \trace\parenth{\sum_{k=1}^K \widetilde{\matA}_k\overline{\bm{b}} \bm{v}_k \tp} + \trace\parenth{\overline{\bm{b}} \ \overline{\bm{b}}\tp}.
\end{align}
For the third term, we have
\begin{align}
    \trace\parenth{\sum_{k=1}^K \widetilde{\matA}_k\overline{\bm{b}} \bm{v}_k \tp} = \trace\parenth{\sum_{k=1}^K \bm{v}_k \tp\widetilde{\matA}_k\overline{\bm{b}}} = \trace\parenth{\sum_{k=1}^K \bm{v}_k \tp\widetilde{\matA}_k\tp\overline{\bm{b}}} = 0.
\end{align}
Therefore, we can obtain that 
\begin{align}
    \vecnorm{\overline{\bm{b}}}^2 = \trace\parenth{\overline{\bm{b}} \ \overline{\bm{b}}\tp} 
    \leq& \trace\parenth{\sum_{k=1}^K \widetilde{\matA}_k \bm{b}_k \bm{b}_k\tp} \\
    =& \sum_{k=1}^K \bm{b}_k\tp\widetilde{\matA}_k \bm{b}_k \\
    \leq& \sum_{k=1}^K \opnorm{\widetilde{\matA}_k}\vecnorm{\bm{b}_k}^2 \\
    \leq& \opnorm{\matA_s^{-1}} \sum_{k=1}^K \opnorm{\matA_k}\vecnorm{\bm{b}_k}^2. \label{eqn:norm_b}
\end{align}
Substituting \Cref{eqn:norm_b} into \Cref{eqn:lem_lhs}, it follows that
\begin{align}
    \vecnorm{\sum_{k=1}^K \matA_k \bm{b}_k}^2 \leq \opnorm{\matA_s}^2 \opnorm{\matA_s^{-1}} \sum_{k=1}^K \opnorm{\matA_k}\vecnorm{\bm{b}_k}^2.
\end{align}

\end{proof}

\subsection{Proof of \Cref{thm:convergence}}
Since each local objective is $\lip$-smooth, we have
\begin{align}
    &\obj(\vx^{(t+1)}) - \obj(\vx^{(t)}) \nonumber \\ 
    \leq& -\alpha \inprod{\nabla \obj(\vx^{(t)})}{\sum_{i=1}^{\numClients}\clientWeight_i\ntg_i^{(t)}} + \frac{\alpha^2 \lip}{2}\vecnorm{\sum_{i=1}^{\numClients}\clientWeight_i\ntg_i^{(t)}}^2 \\ 
    =& -\frac{\alpha}{2}\brackets{\vecnorm{\nabla\obj(\vx^{(t)})}^2 + \vecnorm{\sum_{i=1}^{\numClients}\clientWeight_i\ntg_i^{(t)}}^2 - \vecnorm{\nabla\obj(\vx^{(t)}) - \sum_{i=1}^\numClients \clientWeight_i \ntg_i^{(t)}}^2} + \frac{\alpha^2 \lip}{2}\vecnorm{\sum_{i=1}^{\numClients}\clientWeight_i\ntg_i^{(t)}}^2 \label{eqn:thm2_step1}\\
    \leq& -\frac{\alpha}{2}\vecnorm{\nabla\obj(\vx^{(t)})}^2 + \frac{\alpha}{2}\vecnorm{\nabla\obj(\vx^{(t)}) - \sum_{i=1}^\numClients \clientWeight_i \ntg_i^{(t)}}^2 \label{eqn:thm2_step2}\\
    \leq& -\frac{\alpha}{2}\vecnorm{\nabla\obj(\vx^{(t)})}^2 + \frac{\alpha}{2}\sum_{i=1}^\numClients \clientWeight_i \vecnorm{\nabla\obj_i(\vx^{(t)}) - \ntg_i^{(t)}}^2 \label{eqn:thm2_step3}
\end{align}
where \Cref{eqn:thm2_step1} uses the fact: $\inprod{a}{b} = \frac{1}{2}[\vecnorm{a}^2 + \vecnorm{b}^2 - \vecnorm{a-b}^2]$, \Cref{eqn:thm2_step2} follows from the assumption $\slr \lip < 1$, and \Cref{eqn:thm2_step3} is obtained by applying Jensen's Inequality. For the second term in \Cref{eqn:thm2_step3}, we can further bound it as follows:
\begin{align}
    \vecnorm{\nabla\obj_i(\vx^{(t)}) -\ntg_i^{(t)}}^2 
    =& \vecnorm{\sum_{k=0}^{\localStep_i-1} \matA_i^{(t,k)} \brackets{\nabla\obj_i(\vx^{(t)}) - \nabla \obj_i(\vx_i^{(t,k)})}}^2 \\
    \leq& \sum_{k=0}^{\localStep_i-1}\opnorm{\matA_i^{(t,k)}} \vecnorm{\nabla\obj_i(\vx^{(t)}) - \nabla \obj_i(\vx_i^{(t,k)})}^2 \label{eqn:thm2_step4}\\
    \leq& \lip^2\sum_{k=0}^{\localStep_i-1}\opnorm{\matA_i^{(t,k)}} \vecnorm{\vx^{(t)} - \vx_i^{(t,k)}}^2 \label{eqn:thm2_step5}\\
    \leq& \frac{\lip^2 \Lambda}{\tau_i} \sum_{k=0}^{\localStep_i-1}\vecnorm{\vx^{(t)} - \vx_i^{(t,k)}}^2 \label{eqn:thm2_step6}\\
    =& \frac{\lip^2 \Lambda}{\tau_i}  \sum_{k=0}^{\localStep_i-1}\vecnorm{\Delta_i^{(t,k)}}^2 \label{eqn:thm2_step7}
\end{align}
where \Cref{eqn:thm2_step4} follows \Cref{lem:mat_Jensen}, \Cref{eqn:thm2_step5} is based on the Lipschitz smoothness of the local objectives, and \Cref{eqn:thm2_step6} uses the assumption that matrices $\matA_i^{(t,k)}$ have bounded operator norm. Substituting \Cref{eqn:thm2_step7} into \Cref{eqn:thm2_step3}, we have
\begin{align}
    \obj(\vx^{(t+1)}) - \obj(\vx^{(t)})
    \leq& -\frac{\alpha}{2}\vecnorm{\nabla\obj(\vx^{(t)})}^2 + \frac{\alpha\lip^2\Lambda}{2}\sum_{i=1}^\numClients \frac{\clientWeight_i}{\tau_i}\sum_{k=0}^{\localStep_i-1}\vecnorm{\Delta_i^{(t,k)}}^2.
\end{align}
Taking the sum from $t=0$ to $t=T-1$ and rearranging, we obtain
\begin{align}
    \frac{1}{T}\sum_{t=0}^{T-1}\vecnorm{\nabla\obj(\vx^{(t)})}^2
    \leq& \frac{2(\obj(\vx^{(0)}) - \obj(\vx^{(T)}))}{\slr T} + \frac{\lip^2\Lambda}{T}\sum_{t=0}^{T-1}\sum_{i=1}^\numClients \frac{\clientWeight_i}{\tau_i}\sum_{k=0}^{\localStep_i-1}\vecnorm{\Delta_i^{(t,k)}}^2 \\
    \leq& \frac{2(\obj(\vx^{(0)}) - \obj_{\text{inf}})}{\slr T} + \frac{\lip^2\Lambda}{T}\sum_{t=0}^{T-1}\sum_{i=1}^\numClients \frac{\clientWeight_i}{\tau_i}\sum_{k=0}^{\localStep_i-1}\vecnorm{\Delta_i^{(t,k)}}^2. \label{eqn:thm2_step8}
\end{align}
On the other hand, note that
\begin{align}
    \vecnorm{\Delta_i^{(t,k)}}^2 
    = \lr_i^2\vecnorm{\sum_{s=0}^{k-1}\matB_i^{(t,s)}\nabla \obj_i(\vx_i^{(t,s)})}^2
    \leq k^2\lr_i^2G^2 \leq \tau_i^2 \lr_i^2 G^2. \label{eqn:thm2_step9}
\end{align}
Substituting \Cref{eqn:thm2_step9} into \Cref{eqn:thm2_step8}, we have
\begin{align}
    \frac{1}{T}\sum_{t=0}^{T-1}\vecnorm{\nabla\obj(\vx^{(t)})}^2
    \leq& \frac{2(\obj(\vx^{(0)}) - \obj_{\text{inf}})}{\slr T} + \frac{\lip^2\Lambda}{T}\sum_{t=0}^{T-1}\sum_{i=1}^\numClients \clientWeight_i \lr_i^2\tau_i^2 G^2 \\
    =& \frac{2(\obj(\vx^{(0)}) - \obj_{\text{inf}})}{\slr T} + \lip^2\Lambda G^2 \sum_{i=1}^\numClients \clientWeight_i \lr_i^2\tau_i^2
\end{align}
If we let $\slr=\sum_{i=1}^M \clientWeight_i \lr_i \tau_i \leq \nicefrac{1}{L}$, then it follows that
\begin{align}
    \frac{1}{T}\sum_{t=0}^{T-1}\vecnorm{\nabla\obj(\vx^{(t)})}^2
    \leq& \frac{2(\obj(\vx^{(0)}) - \obj_{\text{inf}})}{\sum_{i=1}^M \clientWeight_i \lr_i \tau_i T} + \lip^2\Lambda G^2 \sum_{i=1}^\numClients \clientWeight_i \lr_i^2\tau_i^2.
\end{align}
When $\lr_i = \lr, \tau_i=\tau, \clientWeight_i = 1/M$, we have
\begin{align}
     \frac{1}{T}\sum_{t=0}^{T-1}\vecnorm{\nabla\obj(\vx^{(t)})}^2
    \leq& \frac{2(\obj(\vx^{(0)}) - \obj_{\text{inf}})}{\lr \tau T} + \lip^2\Lambda G^2\lr^2\tau^2 \\
    \leq& \frac{D L }{ T} + \frac{(D^2 L^2 \Lambda G^2)^{\frac{1}{3}}}{T^{\frac{2}{3}}}
\end{align}
where $D:= \obj(\vx^{(0)}) - \obj_{\text{inf}}$ and the client learning rate is set as
\begin{align}
    \lr = \min \braces{\frac{1}{\tau L}, \frac{1}{\tau T^{\frac{1}{3}}}\parenth{\frac{D}{L^2 \Lambda G^2}}^{\frac{1}{3}}}.
\end{align}

\end{document}